
\documentclass[preprint]{sig-alternate}

%

\toappear{To appear in the 20th ACM SIGKDD Conference on Knowledge Discovery and Data Mining, August 2014, New York, NY, USA.}

\clubpenalty=10000 
\widowpenalty = 10000



\usepackage{amsmath,amssymb}
\usepackage{braket}
\usepackage{graphicx, subcaption}
\usepackage{enumitem}
\usepackage{algorithm, algorithmic}
\usepackage{url, hyperref}

\DeclareMathOperator*{\argmin}{arg\,min}
\DeclareMathOperator*{\argmax}{arg\,max}
\DeclareMathOperator{\Tr}{Tr}

\newcommand{\mbb}[1]{\boldsymbol{#1}}

\setlength\unitlength{1mm}
\usepackage{bm}

\long\def\comment#1{}


\newfont{\bbb}{msbm10 scaled 700}






\newcommand{\fv}{{\bf f}}

\newcommand{\tv}{{\bf t}}
\newcommand{\uv}{{\bf u}}

\newcommand{\xv}{{\bf x}}

\newcommand{\zerov}{{\bf 0}}
\newcommand{\onev}{{\bf 1}}


\newcommand{\Am}{{\bf A}}

\newcommand{\Hm}{{\bf H}}

\newcommand{\Lm}{{\bf L}}
\newcommand{\Mm}{{\bf M}}

\newcommand{\Um}{{\bf U}}
\newcommand{\Wm}{{\bf W}}


\newcommand{\Kc}{{\cal K}}

\newcommand{\Pc}{{\cal P}}

\newcommand{\Sc}{{\cal S}}
\newcommand{\Tc}{{\cal T}}

\newcommand{\Vc}{{\cal V}}

\newcommand{\Lcb}{{\bm {\mathcal L}}}


\newcommand{\phiv}{\hbox{\boldmath$\phi$}}



\newcommand{\mb}[1]{\mathbf{#1}}
\newcommand{\mc}[1]{\mathcal{#1}}
\newcommand{\mcb}[1]{\boldsymbol{\mathcal{#1}}}
\newcommand{\defeq}{\stackrel{\triangle}{=}}

\newcommand{\Scc}{{\Sc^c}}
\newcommand{\Ds}{{\bm {\mathcal{D}}}_{\mathcal{S}}}

\newcommand{\Lcm}{\mcb{L}}
\newcommand{\Dcm}{\mcb{D}}

\newtheorem{theorem}{Theorem}

\begin{document}

%

\title{Active Semi-Supervised Learning Using Sampling Theory for Graph Signals
}

%
%
%
%
%

\numberofauthors{1} 
\author{\alignauthor Akshay Gadde, Aamir Anis and Antonio Ortega \\
\affaddr{Department of Electrical Engineering} \\ 
\affaddr{ University of Southern California, Los Angeles} \\
\email{agadde@usc.edu, aanis@usc.edu, ortega@sipi.usc.edu}
} 


\date{\today}

\maketitle


\begin{abstract}
We consider the problem of offline, pool-based active semi-supervised learning on graphs. 
This problem is important when the labeled data is scarce and expensive whereas unlabeled data is easily available. 
The data points are represented by the vertices of an undirected graph with the similarity between them captured by the edge weights.
Given a target number of nodes to label, the goal is to choose those nodes that are most informative and then predict the unknown labels. 
We propose a novel framework for this problem based on our recent results on sampling theory for graph signals. 
A graph signal is a real-valued function defined on each node of the graph.
A notion of frequency for such signals can be defined using the spectrum of the graph Laplacian matrix. 
The sampling theory for graph signals aims to extend the traditional Nyquist-Shannon sampling theory by allowing us to identify the class of graph signals that can be reconstructed from their values on a subset of vertices. 
This approach allows us to define a criterion for active learning based on sampling set selection which aims at maximizing the frequency of the signals that can be reconstructed from their samples on the set.
Experiments show the effectiveness of our method.
\end{abstract}


\category{I.5.2}{Pattern Recognition}{Design Methodology}[Classifier design and evaluation]
\category{I.5.3}{Pattern Recognition}{Clustering}[Algorithms]

\keywords{Active semi-supervised learning; Graph signal processing; Sampling theory; Graph signal filtering}


\newpage
\section{Introduction}
\vspace{0.5\baselineskip}


In many real-life machine learning tasks, labeled data is scarce whereas unlabeled data is easily available. Active semi-supervised learning is an effective approach for such scenarios. A semi-supervised learning technique must not only learn from the labeled data but also from the inherent clustering present in the unlabeled data~\cite{Zhu-UWM-08}. Further, when the labeling is expensive, it is better to let the learner choose the data points to be labeled so that it can pick the most informative and representative labels. Thus, in an active learning scenario, the goal is to achieve the maximum gain in terms of learning ability for a given, and small, number of label queries. 
In this paper, we propose a novel approach to active semi-supervised learning based on recent advances in sampling theory for graph signals. 

Active learning has been studied in different problem scenarios such as online stream-based sampling, adaptive sampling etc. (see~\cite{Settles-UWM-10} for a review). 
We focus on the problem of pool-based batch-mode active semi-supervised learning, 
where there is a large static collection of unlabeled data from which a very small percentage of data points 
have to be selected in order to be labeled.
%
Batch operation (i.e., selecting a \emph{set} of data points to be labeled) is more realistic in scenarios such as crowdsourcing where it would not be practical to submit for labeling one data point at a time. 
Further, in this paper we focus on the problem of optimizing batches of any size without using any label information, which would be the case when selecting the first batch of 
data points to be labeled.
We leave for future work the problem of incorporating labeled data, which would allow labels obtained for the first batch to be used to optimize data point selection for the second batch, and so on. 


Applying a graph perspective to semi-supervised learning is not new. 
In a graph-based formulation, the data points are represented by nodes of a graph and the edges capture the similarity between the nodes they connect. For example, the weight on an edge might be a function of the distance between the two points in the feature space chosen
for the classification task.  
The membership function of a given class can be thought of as a ``graph signal'', which has a scalar value at each of the nodes (e.g., 1 or 0 depending on whether or not the data point belongs to the class).
Since features have been chosen to be meaningful for the classification task, it is reasonable to expect that nodes that are close together in the feature space will be likely to have the same label. Conversely, nodes that are far away in the feature space are less likely to have the same label. Thus, we expect the membership function to be \emph{smooth} on the graph, i.e., moving from a node to its neighbors in the graph is unlikely to lead to changes in the membership. Thus, the semi-supervised learning problem can be viewed as a problem of interpolating a smooth graph signal. 
This view has led to many effective techniques such as MinCut~\cite{Blum-ICML-01}, Gaussian random fields and harmonic functions~\cite{Zhu-SSL-ICML-03}, local and global consistency~\cite{Zhou-NIPS-04}, manifold regularization~\cite{Belkin-JMLR-06} and spectral graph kernels~\cite{Smola-KER-03}.

Active learning has also benefited from this graph based-view. Many active learning approaches use the graph to quantify the quality of sampling sets~\cite{Gu-ICDM-12, Guillory-NIPS-09, Guillory-UAI-11}. One methodology is to try and pick  a subset of nodes which captures the underlying low-dimensional manifold represented by the graph. Another is to pick the nodes to be labeled in such a way that unlabeled nodes are strongly connected to them. Some methods pick those samples which lead to minimization of generalization error bound. We discuss some of these methods in Section~\ref{sec:related_work}.  

Many of the semi-supervised methods mentioned above are \emph{global}, in the sense that they require inversion or eigen-decomposition of large matrices associated with the underlying graph. This poses a problem in scalable and distributed implementation of these algorithms. Most graph-based active learning methods suffer from the same problem. Another issue with these methods is that they do not give conditions under which the graph signal can be uniquely and perfectly interpolated from its samples on the chosen subset.  

In recent years, there has been a significant amount of work devoted to the study of graph signal processing. The focus of this work has been to extend to the context of graph signals, theoretical results and tools that are well established in the context of conventional signal processing  \cite{SPM_Graphs}. 
In particular, there have been contributions to the design of graph wavelets \cite{hammond11}, 
graph filterbanks \cite{sunil12}, etc.
A key challenge in graph signal processing is to design \emph{localized} algorithms that scale well with graph sizes, i.e., the output at each vertex should only depend on its local neighborhood.

In this paper we leverage our recent work on graph signal sampling and interpolation \cite{Narang-GlobalSIP-13,Anis-ICASSP-14}. We show that the newly developed theoretical results provide a rigorous and unified framework to select points to be labeled and subsequently perform semi-supervised learning. Our framework provides conditions under which a graph signal can be uniquely recovered from its values on a subset of vertices. These conditions lead to a powerful greedy algorithm for choosing the best nodes for labeling. The proposed algorithm is well motivated through a compelling graph theoretic interpretation. We give a numerically efficient way to implement the proposed algorithm which makes it scalable. We also give an effective and efficient semi-supervised learning method that is closely tied to the label selection algorithm and is theoretically well-justified. Both our algorithms are well-suited for a large-scale distributed implementation. We show that our method outperforms several state of the art methods by testing on multiple real datasets. 

The rest of the paper is organized as follows. Section~\ref{sec:sampling_theory} reviews our recent work on sampling theory for graph signals. In Section~\ref{sec:active_ssl} we apply the framework of sampling theory to derive the proposed active semi-supervised learning approach. Section~\ref{sec:related_work} summarizes the related prior work. Experiments are presented in Section~\ref{sec:experiments}. Finally, we provide some concluding remarks in Section~\ref{sec:conclusion}. 

\vspace{0.5\baselineskip}
\section{Sampling theory for \\ graph signals}
\label{sec:sampling_theory}
\vspace{0.5\baselineskip}
We begin by briefly describing the theory of sampling for graph signals formulated in our previous work \cite{Narang-GlobalSIP-13, Anis-ICASSP-14}.

\subsection{Notation}
Throughout this paper, we consider simple, connected, undirected, and weighted graphs $G = (\Vc,E)$ with nodes numbered from the set $\Vc = \{1,2,\dots,N\}$, and edges $E = \{(i,j,w_{ij})\}, i,j \in \mc{V}$, where $(i,j,w_{ij})$ denotes an edge of weight $w_{ij}$ between nodes $i$ and $j$, with $w_{ii} = 0$.
In the present context, the weights denote similarity between the respective nodes.
The degree $d_i$ of a node $i$ is defined as the sum of the weights of edges connected to node $i$, and the degree matrix of the graph is a diagonal matrix defined as $\mb{D} = \text{diag}\{d_1,d_2,\dots,d_N\}$.
The adjacency matrix $\mb{W}$ of the graph is an $N\times N$ matrix with $\Wm_{ij} = w_{ij}$ and the combinatorial Laplacian matrix is defined as $\mb{L} = \mb{D} - \mb{W}$. 
We shall use the symmetric normalized form of the adjacency and the Laplacian matrices defined as $\mcb{W} = \mb{D}^{-1/2}\mb{W}\mb{D}^{-1/2}$ and $\mcb{L} = \mb{D}^{-1/2}\mb{L}\mb{D}^{-1/2}$ respectively. 
$\mcb{L}$ is a symmetric positive semi-definite matrix and has a set of real eigenvalues $0 = \lambda_1 \leq \lambda_2 \leq \dots \leq \lambda_N \leq 2$ and a corresponding orthogonal set of eigenvectors denoted as $\mb{U} = \{\uv_1,\uv_2,\dots,\uv_N\}$.
A subset of nodes of the graph is denoted as a collection of indices $\Sc \subset \Vc$, with $\Scc = \Vc \setminus \Sc$ denoting its complement set.
A restriction of a matrix $\Am$ to rows in set $\Sc_1$ and columns in set $\Sc_2$ is denoted by the submatrix $\Am_{\Sc_1,\Sc_2}$ and for the sake of brevity $\Am_{\Sc,\Sc} = \Am_\Sc$.
Also, $\zerov$ and $\onev$ denote all-zeroes and all-ones vectors of appropriate sizes.

A graph signal is defined as a scalar-valued discrete mapping $f:V \rightarrow \mathbb{R}$, such that $f(i)$ is the value of the signal on node $i$. 
For ease of notation, it can also be represented as a vector $\fv \in \mathbb{R}^N$ with indices corresponding to the node indices in the graph.
In this paper, the signals of interest will be the membership functions associated with the various labels of interest in the classification problem. 
\emph{Sampling} a graph signal $\fv$ onto a subset of nodes $\Sc$, known as the \emph{sampling set}, is realized by retaining the signal's values on the nodes in $\Sc$. The sampled signal is denoted by $\fv(\Sc)$, which is a vector of reduced length $|\Sc|$. 
In our context, a sampled graph signal will include the membership information for the data points that have been labeled.  


\subsection{Preliminaries}

The classical Nyquist-Shannon sampling theorem establishes an upper limit on the bandwidth of signals that can be uniquely reconstructed when sampled at a given sampling rate. To have an analogous result in the realm of graphs, one needs a notion of frequency for graph signals.
Such a spectral interpretation is provided by the eigenvalues and eigenvectors of the Laplacian matrix $\Lcm$, similar to the Fourier transform in traditional signal processing.
The eigenvalues can be thought of as frequencies and indicate the variation in the eigenvectors: a high eigenvalue implies higher variation in the corresponding eigenvector \cite{SPM_Graphs}.
Since the eigenvectors are orthogonal, they form a basis in $\mathbb{R}^N$.
Thus, the \emph{Graph Fourier Transform} (GFT) of a signal $\fv$ is defined as its projection onto the eigenvectors of the graph Laplacian, i.e. $\tilde{\fv}(\lambda_i) = \braket{\fv,\uv_i}$, or more compactly, $\tilde{\fv} = \Um^T\fv$. 

In this context, a smooth or low-pass graph signal can be obtained by forcing high frequency GFT coefficients to vanish. 
More formally, an \emph{$\omega$-bandlimited signal} on a graph is defined to have zero GFT coefficients for frequencies above its bandwidth $\omega$, i.e. its spectral support is restricted to the set of frequencies $[0,\omega]$.
The space of all $\omega$-bandlimited signals is known as the \emph{Paley-Wiener} space and is denoted by $PW_\omega(G)$ \cite{pesenson08}.
Note that $PW_\omega(G)$ is a subspace of $\mathbb{R}^N$.

With the notion of frequency introduced via the GFT, one can frame an adequate \emph{sampling theory} for graph signals using the following ingredients:

\begin{enumerate}[label=\bfseries P\arabic*:, topsep=0pt, itemsep=0pt]
\item \emph{Cutoff frequency -} For a given subset of nodes $\Sc$, find the cut-off frequency $\omega$, such that any $\fv \in PW_{\omega}(G)$ can be exactly recovered from its samples $\fv(\Sc)$.
\item \emph{Optimal sampling set -} For a given cut-off frequency $\omega$, find the the smallest subset of nodes $\Sc_\text{opt}$ (i.e. with minimum $|\Sc_\text{opt}|$) such that all signals $\fv \in PW_\omega(G)$ can be uniquely recovered from their samples $\fv(\Sc_\text{opt})$ on $\Sc_\text{opt}$.
\item \emph{Reconstruction algorithm -} Given samples $\fv(\Sc)$ of a graph signal $\fv$ on a subset of nodes $\Sc$, find the reconstructed signal values $\fv(\Scc)$ on the complementary subset $\Scc$.
\end{enumerate}
Note that for regular sampling in the traditional signal processing, problems \textbf{P1} and \textbf{P2} are reciprocal, i.e., knowing one automatically leads to the solution of the other.
However, this does not hold for irregular sampling, as in the case of graph signals.
Next, we briefly describe the solution to each of the problems above, and refer to \cite{Narang-GlobalSIP-13, Anis-ICASSP-14} for the details. 

\subsection{P1: Cut-off frequency}
\label{sec:P1}

Let $L_2(\Scc)$ denote the space of all graph signals that are zero everywhere except possibly on the nodes in $\Scc$, i.e., $\forall \phi \in L_2(\Scc), \phi(\Sc) = 0$. 
Also, let $\omega(\phi)$ denote the bandwidth of a graph signal $\phi$, i.e., the value of the maximum non-zero frequency of that signal. Then the following theorem can be proved~\cite{Anis-ICASSP-14}:  
\begin{theorem}[Sampling Theorem]
\label{thm:sampling}
For a graph $G$, with normalized Laplacian $\Lcm$, any signal $\fv \in PW_\omega(G)$ can be perfectly recovered from its values on a subset of nodes $\Sc \subset \Vc$ if and only if
\begin{equation}
\omega < \omega_c(\Sc) \defeq \inf_{\phi \in L_2(\Scc)} \omega(\phi)
\end{equation}
where $\omega_c(\Sc)$ is the cut-off frequency.
\end{theorem}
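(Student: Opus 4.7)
The plan is to recast "perfect recovery" as injectivity of the sampling operator. Restricted to $PW_\omega(G)$, the sampling map $\mathbf{f} \mapsto \mathbf{f}(\mathcal{S})$ is linear, so it is injective (and hence admits a left inverse giving perfect recovery) if and only if its kernel is trivial, i.e.
\[
PW_\omega(G) \cap L_2(\mathcal{S}^c) = \{\mathbf{0}\}.
\]
This reduction is the backbone of both directions of the proof.

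For the sufficiency ($\omega < \omega_c(\mathcal{S}) \Rightarrow$ recovery), I would take $\mathbf{f}_1, \mathbf{f}_2 \in PW_\omega(G)$ with $\mathbf{f}_1(\mathcal{S}) = \mathbf{f}_2(\mathcal{S})$ and set $\mathbf{g} = \mathbf{f}_1 - \mathbf{f}_2$. Then $\mathbf{g} \in PW_\omega(G)$, so $\omega(\mathbf{g}) \le \omega$, and $\mathbf{g}(\mathcal{S}) = \mathbf{0}$, i.e.\ $\mathbf{g} \in L_2(\mathcal{S}^c)$. If $\mathbf{g} \neq \mathbf{0}$, then by definition of the infimum $\omega(\mathbf{g}) \ge \omega_c(\mathcal{S}) > \omega$, contradicting $\omega(\mathbf{g}) \le \omega$. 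Hence $\mathbf{g} = \mathbf{0}$ and recovery is unique.

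For the necessity ($\omega \ge \omega_c(\mathcal{S}) \Rightarrow$ failure), the key observation is that for any nonzero $\phi$, $\omega(\phi)$ is by construction an eigenvalue of $\mathcal{L}$, so it lives in the finite set $\{\lambda_1,\dots,\lambda_N\}$. Consequently the infimum defining $\omega_c(\mathcal{S})$ is attained by some nonzero $\phi^* \in L_2(\mathcal{S}^c)$ with $\omega(\phi^*) = \omega_c(\mathcal{S})$. Assuming $\omega \ge \omega_c(\mathcal{S})$, this $\phi^*$ belongs to $PW_\omega(G)$ and vanishes on $\mathcal{S}$, so $\mathbf{0}$ and $\phi^*$ are two distinct signals in $PW_\omega(G)$ with identical samples on $\mathcal{S}$, obstructing unique recovery.

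The step I expect to be the main obstacle is exactly this attainment argument in the reverse direction: the strict inequality $\omega < \omega_c(\mathcal{S})$ in the theorem statement cannot be weakened, and justifying that requires noting that bandwidths take values in the discrete spectrum of $\mathcal{L}$ so that the infimum is in fact a minimum. Everything else is essentially bookkeeping on the subspace decomposition $PW_\omega(G)$ versus $L_2(\mathcal{S}^c)$.
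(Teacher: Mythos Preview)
Your argument is correct and matches the paper's own reasoning: the paper likewise reduces perfect recovery to the condition $PW_\omega(G) \cap L_2(\Scc) = \{0\}$ and gives exactly the ``add a nonzero $\phi \in L_2(\Scc)$'' obstruction you describe, deferring the formal details to~\cite{Anis-ICASSP-14}. Your observation that the infimum is attained because bandwidths lie in the finite spectrum of $\Lcm$ is the right way to close the necessity direction, and is more explicit than what the paper states here.
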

The theorem leads to a cut-off frequency that is {\em lower} than the minimum bandwidth of any signal in $L_2(\Scc)$. 
Intuitively, a signal $\phi \in L_2(\Scc)$ can be added to any input signal $\fv$ without affecting its sampled version (since $\phi$ is identically zero for all vertices that are sampled, i.e., those in $\Sc$). Thus, if there existed a 
$\phi \in L_2(\Scc)$ such that $\phi \in PW_\omega(G)$ we would have that both 
$\fv$ and $\phi + \fv$ belong to $PW_\omega(G)$ {\em and} lead to the same set of samples on $\Sc$. So clearly it would not be possible to recover them both, and thus sampling of such signals in $PW_\omega(G)$ would not be possible. The condition in Theorem~\ref{thm:sampling} ensures that $PW_\omega(G) \cap L_2(\Scc) = \{0\}$ and thus no such $\phi$ exists.  

From Theorem~\ref{thm:sampling}, finding the maximum cut-off frequency for a set $\Sc$ requires finding the bandwidth $\omega(\phi^*)$ of the smoothest possible signal $\phi^* \in L_2(\Scc)$. 
A brute-force approach to this would entail computing the GFT of all signals in $L_2(\Sc)$ and exhaustively searching for $\phi^*$.
We instead devise a computationally efficient way to approximate the bandwidth of any signal $\phi$ for a given integer parameter $k > 0$ as follows:
\begin{equation}
\omega_k(\phi) = \left( \frac{\phi^t \Lcm^k \phi}{\phi^t \phi} \right)^{1/k}
\end{equation}
We then replace $\omega(\phi)$ in Theorem~\ref{thm:sampling} by $\omega_k(\phi)$ in the objective function to obtain our estimated bandwidth:
\begin{equation}
\label{eq:compute_cutoff}
\Omega_k(\Sc) = \inf_{\phi \in L_2(\Scc)} \omega_k(\phi) = \inf_{\phi \in L_2(\Scc)} \left( \frac{\phi^t \Lcm^k \phi}{\phi^t \phi} \right)^{1/k}.
\end{equation}
Then, the smoothest possible signal $\phi^*$ in $L_2(\Scc)$ can be approximated by the minimizer $\phi^*_k$ in~\eqref{eq:compute_cutoff}.
Numerically, $\Omega_k(\Sc)$ and $\phi^*_k$ can be determined from the smallest eigen-pair $(\sigma_{1,k}, \psi_{1,k})$ of the reduced matrix $(\Lcm^k)_\Scc$:
\begin{gather}
\label{eq:cutoff_est}
\Omega_k(\Sc) = \sigma_{1,k}, \\
\label{eq:phi_est}
\phi_k^*(\Scc) = \psi_{1,k}, \; \phi_k^*(\Sc) = \zerov.
\end{gather}
This approach does not require complete eigen-decomposition of $\Lcm$ and is computationally tractable.
One can show that $k$ controls the accuracy of the cut-off estimate (refer to \cite{Anis-ICASSP-14} for details).
As we increase the value of $k$, $\Omega_k(\Sc)$ tends to give a better estimate of the cut-off frequency. Thus, there is a trade-off between accuracy of the estimate on the one hand, and complexity and numerical stability on the other that arise due to the power $k$ in $\Lcm^k$. 
Moreover, $\Omega_k(\Sc)$ can be proven to be always less than the actual cut-off $\omega_c(\Sc)$, i.e. the Sampling Theorem still holds for the subset $\Sc$ except that the class of recoverable signals is determined to be narrower as a penalty for the cut-off approximation.


\subsection{P2: Sampling set}
\label{sec:sampling_set}
We now describe the framework for the converse question: 
given a cut-off frequency $\omega_c$ for $PW_\omega(G)$, what is the smallest sampling set $\Sc_\text{opt}$ so that
a signal $\fv \in PW_\omega(G)$ is uniquely represented by $\fv(\Sc_\text{opt})$. 
If $K_c$ represents the number of eigenvalues of $\Lcm$ below $\omega_c$, then by dimensionality considerations $|\Sc_\text{opt}| \geq K_c$.
Also, note that $\Sc_\text{opt}$ may not be unique.
Formally, one can use Theorem \ref{thm:sampling} and relax the true cut-off $\omega_c(\Sc)$ by $\Omega_k(\Sc)$, then $\Sc_\text{opt}$ can be found from the following optimization problem:
\begin{equation}
\underset{\Sc}{\text{Minimize}} \;\; |\Sc| \quad \text{subject to} \quad \Omega_k(\Sc) \geq \omega_c 
\end{equation}
This is a combinatorial problem because we need to compute $\Omega_k(\Sc)$ for every possible subset $\Sc$.

However, this problem can be solved using a greedy heuristic to get an estimate $\Sc_\text{est}$ of the optimal sampling set.
Starting with an empty sampling set $\Sc$ (with corresponding $\Omega_k(\Sc) = 0$) we keep adding nodes to $\Sc$ (from $\Scc$) one-by-one while trying to ensure maximum increase in $\Omega_k(\Sc)$ at each step.
The hope is that $\Omega_k(\Sc)$ reaches the target cut-off $\omega_c$ with minimum number of node additions to $\Sc$.
To understand which nodes should be included in $\Sc$, we introduce a binary relaxation of our cut-off formulation by defining the following matrix
\begin{equation}
\Mm^\alpha_k(\tv) \defeq \Lcm^k + \alpha \; \Dcm(\tv), \quad k \in \mathbb{Z}^+, \alpha > 0, \tv \in \mathbb{R}^N
\end{equation}
where $\Dcm(\tv)$ is a diagonal matrix
with $\tv$ on its diagonal. 
Let $\left( \lambda^\alpha_k(\tv), \xv^\alpha_k(\tv) \right)$ denote the smallest eigen-pair of $\Mm^\alpha_k(\tv)$.
Then, if $\onev_\Sc: \Vc \rightarrow \{0,1\}$ denotes the indicator function for the subset $\Sc$ (i.e. $\onev(\Sc) = \onev$ and $\onev(\Scc) = \zerov$), one has
\begin{equation}
\label{eq:min_lambda}
\lambda^\alpha_k(\onev_\Sc) 
= \inf_{\xv} \left( \frac{\xv^t \Lcm^k \xv }{\xv^t \xv} + \alpha \frac{\xv(\Sc)^t\xv(\Sc) }{\xv^t \xv} \right)
\end{equation}
Note that the right hand side of the equation above
is simply an \emph{unconstrained regularization} of the constrained optimization problem in (\ref{eq:compute_cutoff}). 
When $\alpha \gg 1$, the components $\xv(\Sc)$ are highly penalized during minimization. Thus, if $\xv^\alpha_k(\onev_\Sc)$ is the minimizer in (\ref{eq:min_lambda}), then $[\xv_\alpha^k(\onev_\Sc)](\Sc) \rightarrow \zerov$, i.e. the values on nodes $\Sc$ tend to be very small. 
Therefore, for $\alpha \gg 1$, we have
\begin{equation}
\label{eq:equiv}
\phi_k^* \approx \xv^\alpha_k(\onev_\Sc),\quad  
\left( \Omega_k(\Sc) \right)^k \approx \lambda^\alpha_k(\onev_\Sc)  
\end{equation}
From the above equation, we observe that the problem of greedily maximizing $\Omega_k(\Sc)$ is equivalent to maximizing $\lambda_k^\alpha(\onev_\Sc)$, and thus, we simply need to study the variation of $\lambda^\alpha_k(\tv)$ with $\tv$, a real-valued vector in $\mathbb{R}^N$, at $\tv = \onev_\Sc$.
This relaxation circumvents the combinatorial nature of our problem and has been used earlier to study graph partitioning based on Dirichlet eigenvalues~\cite{Osting-arxiv-13}.
The gradient of $\lambda^\alpha_k(\tv)$ with respect to $\tv(i)$ is given by
\begin{equation}
\left. \frac{d\lambda_\alpha^k(\tv)}{d\tv(i)} \right|_{\tv = \onev_\Sc} = \alpha \left([\xv_\alpha^k(\onev_\Sc)](i) \right)^2 \approx \alpha (\phiv_k^*(i))^2.
\end{equation}
This equation forms the basis of our greedy heuristic: starting with an empty $\Sc$ (i.e., $\onev_\Sc = \zerov$), if at each step, we include the node on which the smoothest signal $\phi_k^* \in L_2(\Scc)$ has maximum energy (i.e., $\onev_\Sc(i) \leftarrow 1, i = \text{arg max}_j\left[(\phiv_k^*(j))^2\right]$), then the cut-off estimate $\Omega_k(\Sc)$ tends to increase maximally. 

While the algorithm in \cite{Anis-ICASSP-14} has a goal of finding an $\Sc$ of smallest possible size that satisfies a target cut-off frequency, we can easily adapt it for our cut-off frequency maximization-based active learning algorithm. This will be discussed in detail in Section~\ref{sec:proposed}.

\subsection{P3: Reconstruction}
\label{sec:recon}
A graph signal $\fv \in PW_{\omega}(G)$ can be written as a linear combination eigenvectors of $\Lcb$ with eigenvalues less than $\omega$, i.e., $\fv = \Um_{\Vc,\Kc} \boldsymbol{\alpha}$ where $\Kc$ is the index set of those eigenvectors and $\boldsymbol{\alpha}$ is a vector containing the corresponding GFT coefficients. 
When the unique recovery conditions of Theorem~\ref{thm:sampling} are satisfied, $\boldsymbol{\alpha}$ and the signal $\fv$, can be recovered from its subsampled version $\fv(\Sc)$ by solving the following least squares problem:
\begin{align}
\fv(\Sc) &= \Um_{\Sc,\Kc} \boldsymbol{\alpha} \\
\Rightarrow 
\boldsymbol{\alpha} &= \Um_{\Sc,\Kc}^{+} \fv(\Sc).
\end{align}
%
%
Note that if the original signal $\fv$ is not bandlimited, i.e., $\fv \notin PW_\omega(G)$, then the least squares solution corresponds to an approximation of $\fv$ in $PW_\omega(G)$ (in $l_2$ sense). 

The least squares solution requires eigen-decomposition of $\Lcb$ which is computationally expensive and may not be practical for large graphs. We now describe the iterative, distributed algorithm developed in \cite{Narang-GlobalSIP-13} based on projection onto convex sets~(POCS). 
The proposed method is similar to the Papoulis-Gerchberg algorithm~\cite{Sauer-87} in classical signal processing which is used to reconstruct a bandlimited signal from irregular samples. The convex sets of interest in this case are
\begin{align}
\label{eq:c1}
C_1 &= \{\xv: \Ds\xv = \Ds\fv \} \\
\label{eq:c2}
C_2 &= PW_{\omega}(G),
\end{align}
where $\Ds$ is the downsampling operator such that $\Ds\fv = \fv(\Sc)$.
The unique solution $\fv$ to the least squares problem satisfies the following two constraints:
(1)~the signal equals the known values on the sampling set (i.e., $\fv \in C_1$),
(2)~the signal is $\omega$-bandlimited, where $\omega$ is computed using~(\ref{eq:cutoff_est}) (i.e., $\fv \in C_2$).
The projector for $C_2$ is $\boldsymbol{\Pc}_{\omega}:\mathbb{R}^N \rightarrow PW_{\omega}(G)$ which is a low-pass graph filter such that
\begin{equation}
\boldsymbol{\Pc}_{\omega} \xv \in PW_{\omega}(G) \quad \forall \; \xv \in \mathbb{R}^N
\end{equation}
$\boldsymbol{\Pc}_{\omega}$ can be written in graph spectral domain as $\boldsymbol{\Pc}_{\omega} = \Hm(\Lcb) = \sum_{i = 1}^N h(\lambda_i) \uv_i \uv_i^t$ where
\begin{equation}
h(\lambda) = \left\{ 
  \begin{array}{l l}
    1, & \quad \text{if $\lambda < \omega$}\\
    0, & \quad \text{if $\lambda \geq \omega$}
  \end{array} \right.
  \label{eq:ideal_kernel}
\end{equation}
We define the projection operator for $C_1$ as $\boldsymbol{\Pc}_{\Sc}:\mathbb{R}^N \rightarrow C_1$ which replaces the samples on $\Sc$ by the known values.
\begin{equation}
\boldsymbol{\Pc}_{\Sc}\xv = \xv + \Ds^t(\fv(\Sc) - \Ds \xv).
\end{equation}
\begin{figure}
\centering
  \includegraphics[width=0.2\textwidth]{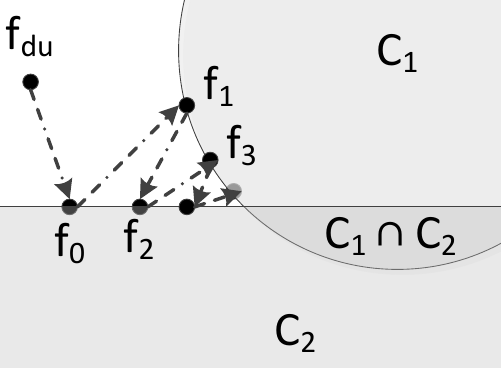}
  \caption{Iterative reconstruction using POCS}
  \label{fig:pocs}
\end{figure}
With this notation the proposed iterative algorithm  can be written as:
\begin{align}
\fv_0 &= \boldsymbol{\Pc}_{\omega} (\Ds^t \fv(\Sc)) \nonumber\\
\fv_{i+1} &= \boldsymbol{\Pc}_{\omega} \boldsymbol{\Pc}_{\Sc} \fv_i
\label{eq:iter_band}
\end{align}
At each iteration the algorithm resets the signal samples on $\Sc$ to the actual given samples and then projects the signal onto the low-pass space $PW_{\omega}(G)$.  
Figure~\ref{fig:pocs} depicts this procedure graphically. It can be shown that $\boldsymbol{\Tc} = \boldsymbol{\Pc}_{\omega}\boldsymbol{\Pc}_{\Sc}$ is a non-expansive and asymptotically regular operator. Hence, the iterations in \eqref{eq:iter_band} converge to the unique point $\fv \in C_1 \cap C_2$ which is the desired solution.

The low pass filter $\boldsymbol{\Pc}_{\omega}$ above is a spectral graph filter with an ideal  brick-wall type spectral response. Thus, the exact computation of $\boldsymbol{\Pc}_{\omega}$ would require knowledge of the GFT, which we would like to avoid due to high computational complexity for large graphs. 
However, it is possible to approximate the ideal filtering operation as a matrix polynomial in terms of $\Lcb$, that can be implemented efficiently using only matrix vector products. Thus we replace $\boldsymbol{\Pc}_{\omega}$ in~(\ref{eq:iter_band}) with an approximate low pass filter $\boldsymbol{\Pc}_\omega^{\text{poly}}$ given by:
\begin{equation}
\boldsymbol{\Pc}_\omega^{\text{poly}} = \sum_{i = 1}^N  \left(\sum_{j = 0}^p a_j \lambda_i ^j \right) \uv_i \uv_i^t = \sum_{j = 0}^p a_j \Lcb ^j
\end{equation}
We specifically use the truncated Chebychev polynomial expansion of any spectral kernel $h(\lambda)$, as proposed in~\cite{hammond11}, in our experiments. It is easy to show that an operator which is a $p$-degree polynomial in $\Lcb$ is $p$-hop localized on the graph and can be implemented in a distributed fashion. In order to ensure that the Chebyshev polynomial approximation is good, we first approximate the ideal spectral kernel by a smooth, continuous sigmoid-like function~(see Figure~\ref{fig:spectral_response})
\begin{equation}
h'(\lambda) = \frac{1}{\left(1+ \exp(\alpha(\lambda - \omega))\right)}
\end{equation}
Due to these approximations in the filter, the reconstructed signal obtained via POCS is different from the true band-limited signal. However, in semi-supervised learning applications we do not expect the signals (i.e., class membership functions) to be exactly bandlimited anyway. So using a filter with slowly decaying spectral response ends up improving the classification accuracy slightly. 
\begin{figure}
\centering
\includegraphics[width = 0.30\textwidth]{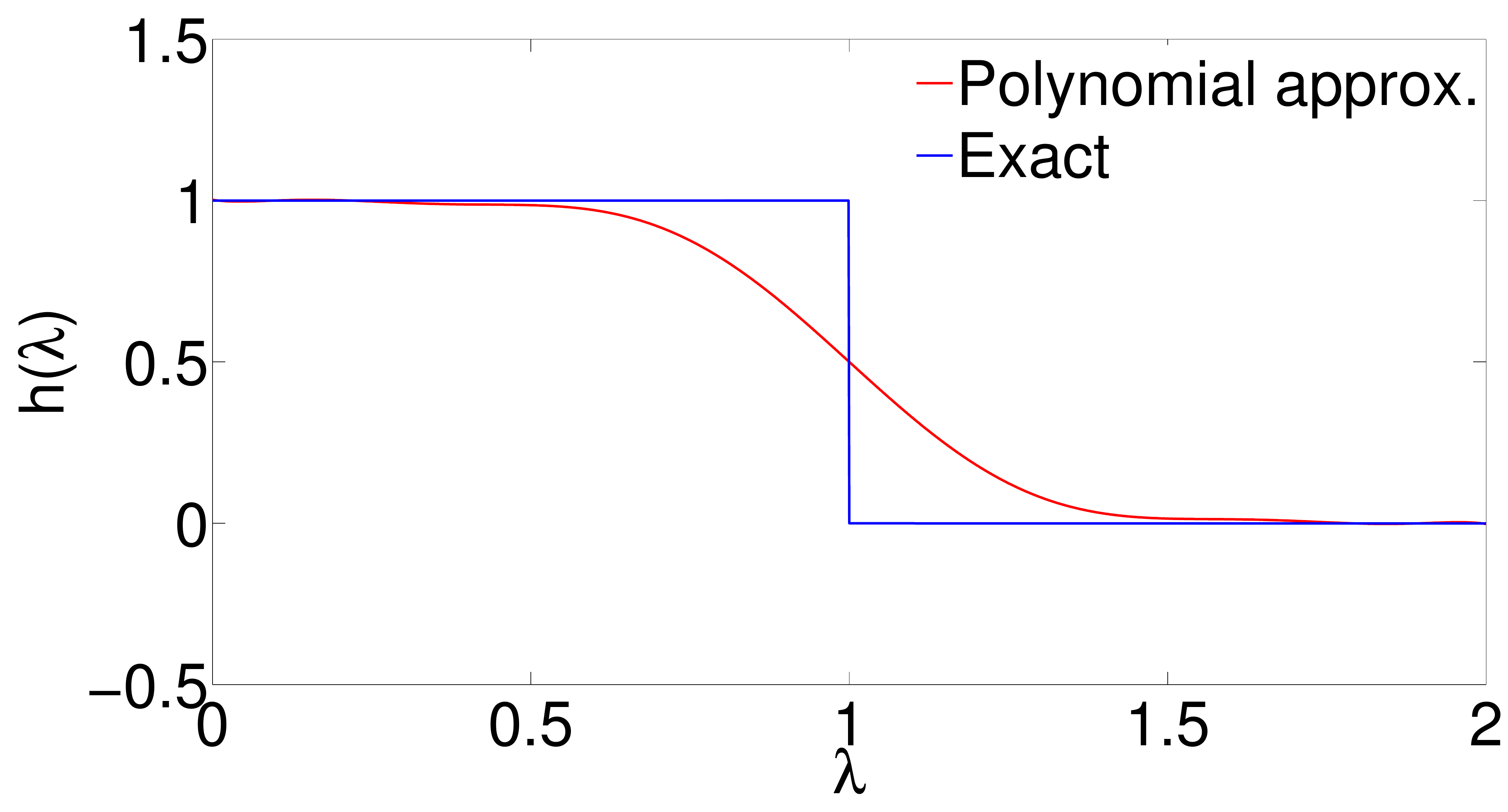}
\caption{Spectral response of an approximate polynomial filter of degree 10. $\omega = 1, \alpha = 8$.}
\label{fig:spectral_response}
\end{figure}
%


\section{Graph Sampling Based Active \\ semi-supervised learning}
\label{sec:active_ssl}
We now relate the sampling theory developed for graph signals to active semi-supervised learning and propose our solution to the problem.
As noted earlier, if the edges of the graph represent similarity between the nodes, then a graph signal defined using the membership functions of a particular class tends to be smooth. 
This is illustrated experimentally in Figure~\ref{fig:gft}.
In Section~\ref{sec:P1} we showed how to estimate the sampling cut-off frequency for a set of vertices. 
In practice, class membership signals are not strictly bandlimited (see Figure~\ref{fig:gft}). Thus we will be approximating a non-bandlimited signal with one that is bandlimited to the cut-off frequency of the chosen vertex set. 
The key observation in our work is that, even though we cannot recover the ``true'' membership signal exactly from its samples, an active learning approach {\em should aim at selecting the sampling set with maximum cut-off frequency}. This is obviously true since $PW_\omega(G) \subset PW_{\omega'}(G)$ if $\omega \leq \omega'$ and thus, for any signal, its best approximation with a signal from $PW_{\omega'}(G)$ can be no worse (in terms of $l_2$ error) than its best approximation with a signal from $PW_\omega(G)$. 

In this setting, predicting the labels of the unknown data-points using the labeled data amounts to reconstructing a bandlimited graph signal from its values on the sampling set.
Thus, based on the above reasoning the active learning strategy,  
given a target number of datapoints to be labeled, should be to find a set $\Sc$, with that size, so that the  cut-off frequency of $\Sc$ is maximized. 


%
\begin{figure*}
\centering
\begin{subfigure}{0.32\textwidth}
  \centering
  \includegraphics[width=0.9\textwidth]{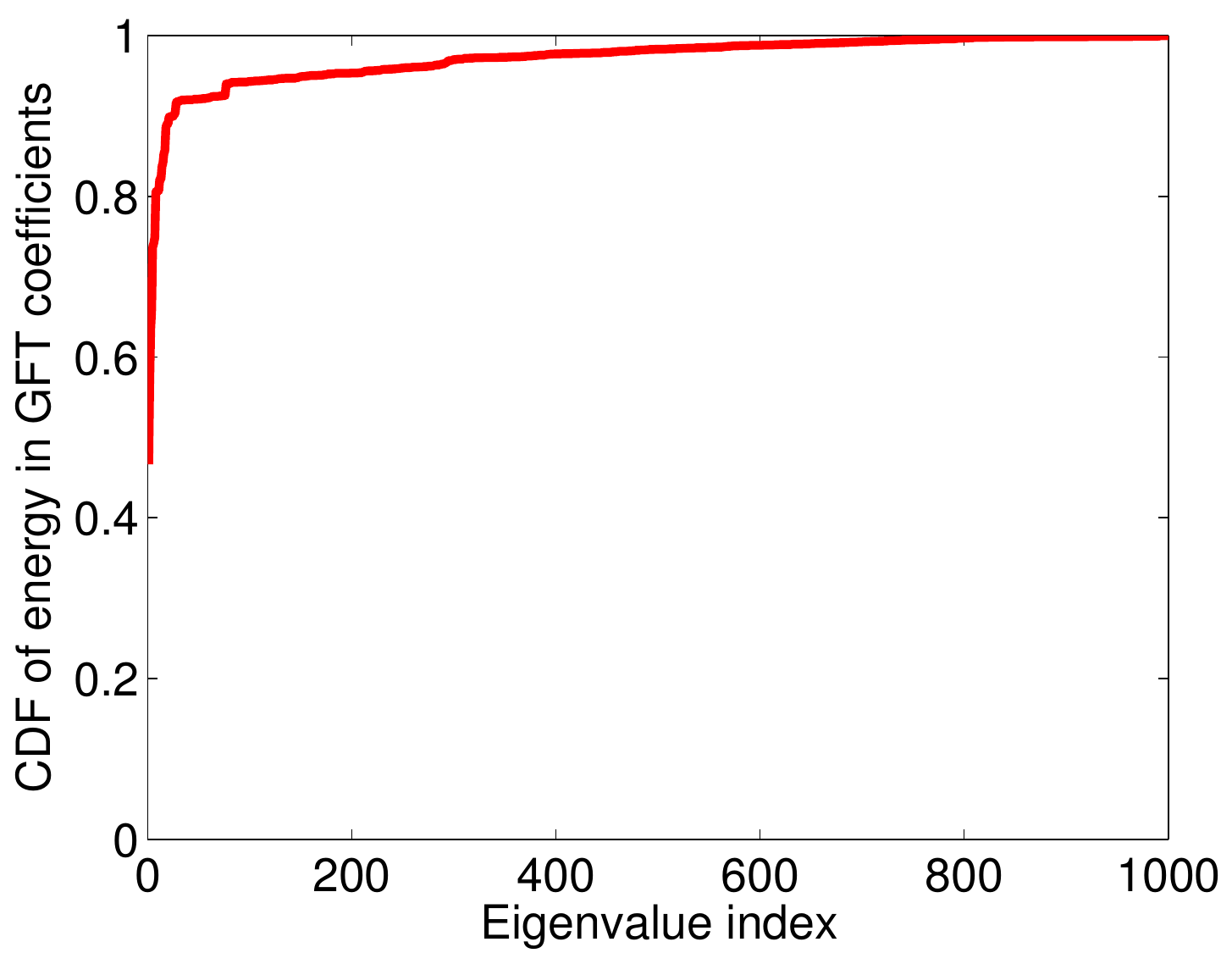}
  \caption{USPS}
  \label{fig:gft_usps}
\end{subfigure}
\begin{subfigure}{0.32\textwidth}
  \centering
  \includegraphics[width=0.9\textwidth]{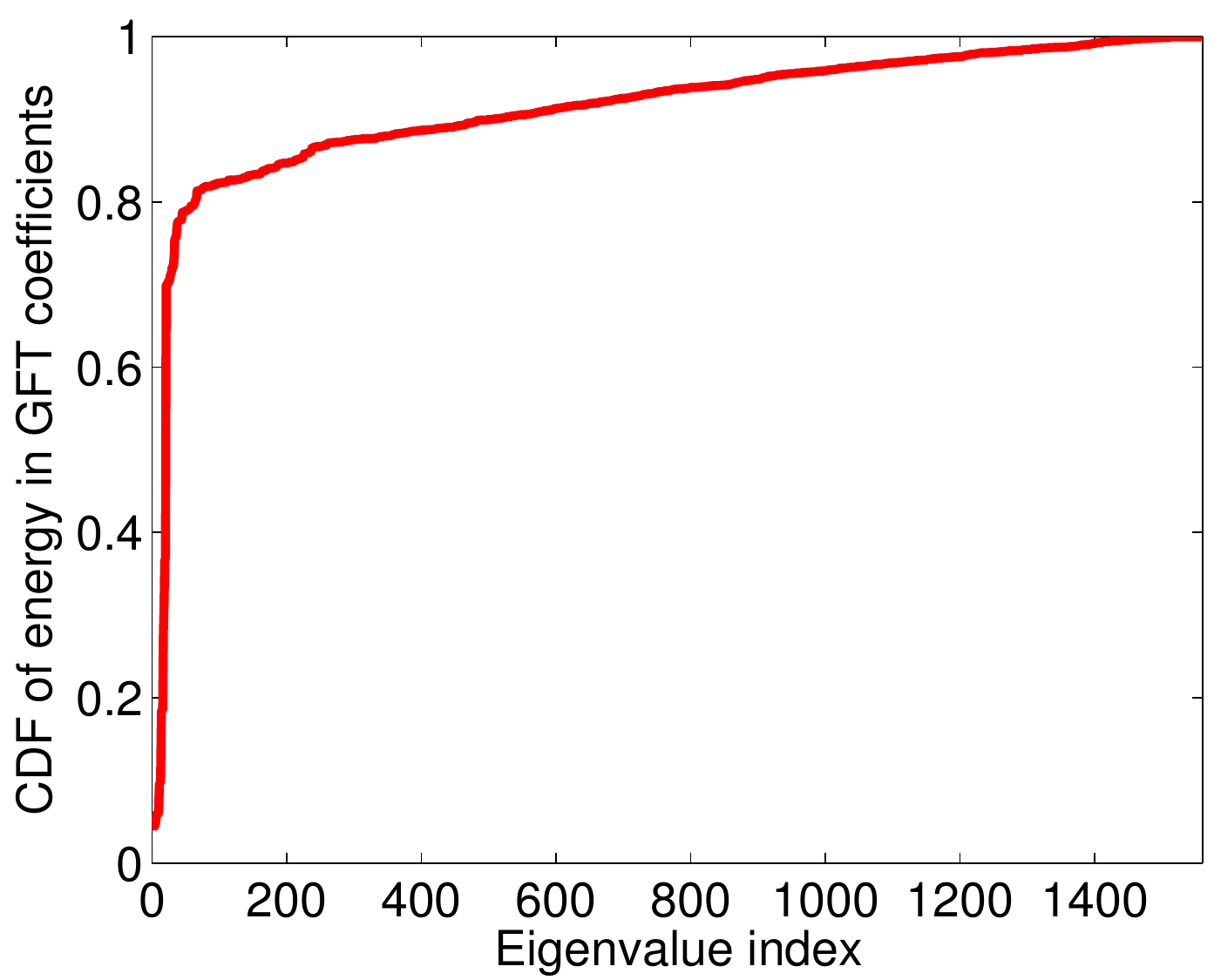}
  \caption{Isolet}
  \label{fig:gft_isolet}
\end{subfigure}
\begin{subfigure}{0.32\textwidth}
  \centering
  \includegraphics[width=0.9\textwidth]{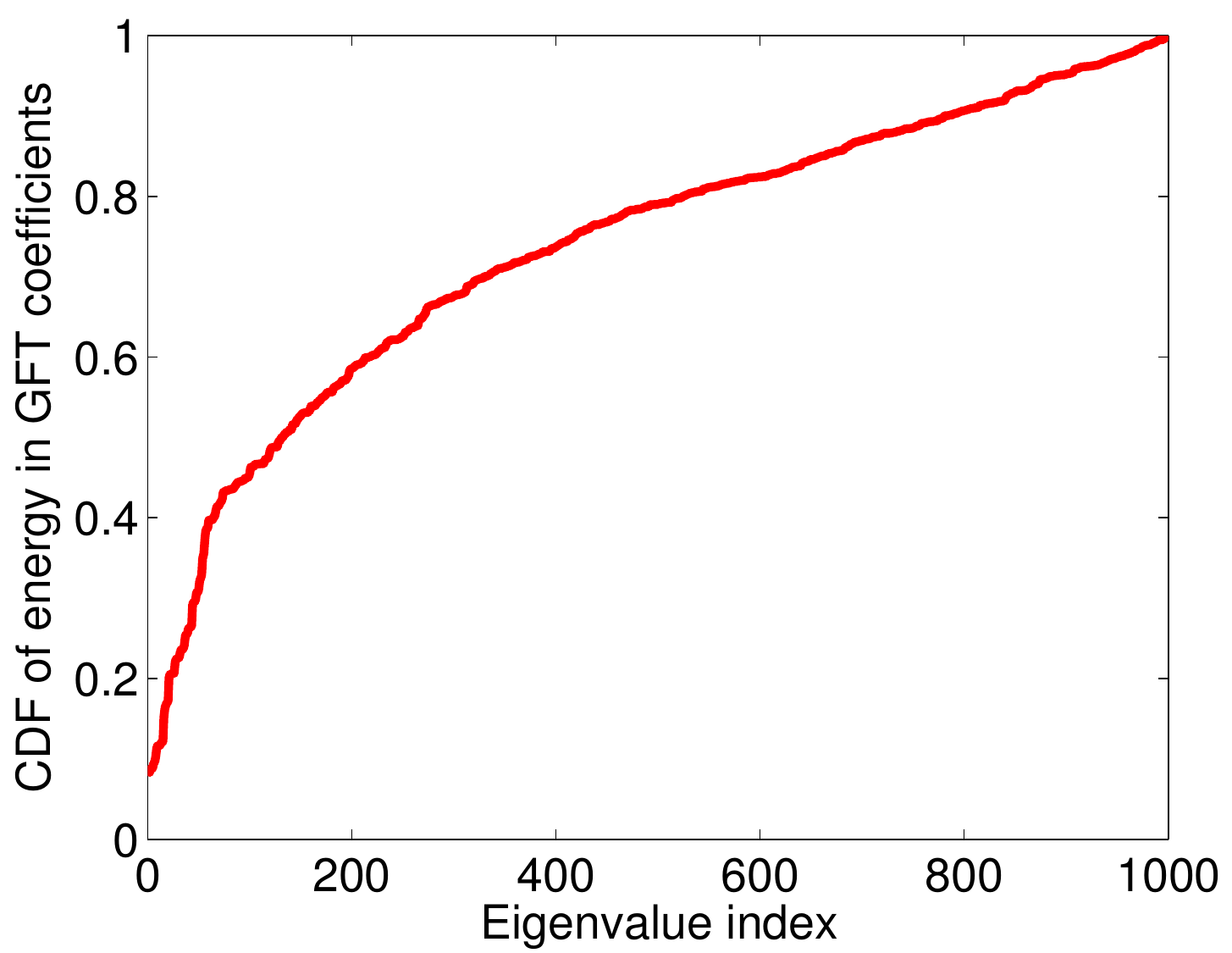}
  \caption{20 newsgroups}
  \label{fig:gft_newsgroups}
\end{subfigure}
\caption{Cumulative distribution of energy in the GFT coefficients of one of the class membership functions pertaining to the three real-world dataset experiments considered in Section~\ref{sec:experiments}. Note that most of the energy is concentrated in the low-pass region.}
\label{fig:gft}
\end{figure*}
%


\subsection{Proposed method}
\label{sec:proposed}
Now, we present the details of our method. We target a multi-class active semi-supervised learning problem with $C$ classes. The true membership function for class $j$ is denoted as $\fv_j: \Vc \mapsto \{0,1\}$, where $\fv_j(i) = 1$ indicates that node $i$ belongs to class $j$. 
These membership functions are taken to be the graph signals for our setting. 
The predicted membership functions for each class take real values and are denoted as $\hat{\fv}_j:\Vc \mapsto \mathbb{R}$.
The predicted label of node $i$ is given by $\argmax_{j} \hat{\fv}_j(i)$. 
We denote the labeled set as $\Sc_L$ and the unlabeled set as $\Sc_U = \Vc \setminus \Sc_L$. 
Then, our solution to the active semi-supervised learning task can be formally summarized as follows:
\begin{enumerate}[itemsep=0pt]
\item Given a size $m$ and parameter $k$, we first find the optimal labeled set $\Sc_L^*$ and corresponding cut-off frequency $\Omega_k(\Sc_L^*)$ as follows:
\begin{equation}
\label{eq:max_prob}
\Sc_L^* = \argmax_{\Sc:|\Sc| = m} \; \Omega_k(\Sc) 
\end{equation}
We solve this problem in a greedy fashion by adding nodes to $\Sc$ that maximize the increase in $\Omega_k(\Sc)$ at each step (cf. Section~\ref{sec:sampling_set}). This procedure is summarized with Algorithm~\ref{alg:alg1}
\item Next, we query the labels of nodes in $\Sc_L^*$.
\item Finally, we determine the predicted membership functions $\hat{\fv}_j$ for each class from $\fv_j(\Sc_L^*), j = 1,\dots,C$ using the POCS iterative method described in Section~\ref{sec:recon}, where $\Sc = \Sc_L^*$ and $\omega = \Omega_k(\Sc_L^*)$ are used in \eqref{eq:c1} and \eqref{eq:c2} to construct the convex sets.
\end{enumerate}
\begin{algorithm}[t]
\caption{Greedy heuristic for finding $\Sc_L^*$}
\label{alg:alg1}
\begin{algorithmic}[1]
\REQUIRE $G = \{\Vc,E\}$, $\Lcm$, target size $m$, parameter $k \in \mathbb{Z}^+$.
\ENSURE $\Sc = \{ \emptyset \}$.
\WHILE{$|\Sc| \leq m$}
\STATE For $\Sc$, compute the smoothest signal $\phi_k^* \in L_2(\Scc)$ using (\ref{eq:cutoff_est}) and (\ref{eq:phi_est}).
\STATE $v \leftarrow \text{arg max}_i \left[(\phi_k^*(i))^2\right]$.
\STATE $\Sc \leftarrow \Sc \cup v$.
\ENDWHILE
\STATE $\Sc_L^* \leftarrow \Sc$.
 \end{algorithmic}
\end{algorithm}
\subsection{Graph Theoretic Interpretation}
\label{sec:graph_interpretation}
In this section, we will provide an intuitive interpretation for our node selection algorithm in terms of connected-ness among the nodes. To simplify the exposition, we consider the maximization problem~\eqref{eq:max_prob} for $k = 1$:
\begin{equation}
\Omega_1(\Sc) = \inf_{\substack{\xv(\Sc) = \zerov \\ ||\xv|| = 1}} \xv^t \Lcm \xv
\end{equation}
This expression appears more commonly as part of discrete Dirichlet eigenvalue problems on graphs. Specifically, it is equal to the Dirichlet energy of the subset $\Scc$ \cite{Chung-97, Osting-arxiv-13}.
The sampling set selection problem seeks to identify the subset $\Sc$ that maximizes this objective function.
To give an intuitive interpretation of our goal, we expand the objective function for any $\xv$ with constraint $\xv(\Sc) = \zerov$ as follows:
\begin{align}
\xv^t \Lcm \xv &= \sum_{i \sim j} w_{ij} \left(\frac{x_i}{\sqrt{d_i}} - \frac{x_j}{\sqrt{d_j}} \right)^2 \nonumber \\
&= \sum_{\substack{i\sim j\\ i \in \Sc, j \in \Scc}} w_{ij} \left( \frac{x_j^2}{d_j} \right)
+ \sum_{\substack{ i \sim j \\ i,j \in \Scc}} w_{ij} \left(\frac{x_i}{\sqrt{d_i}} - \frac{x_j}{\sqrt{d_j}} \right)^2.   \nonumber \\
\end{align}
The minimizer in the equation above is the first Dirichlet eigenvector which is guaranteed to have strictly positive values on $\Scc$ \cite{Osting-arxiv-13}. 
Therefore, the contribution of the second term is expected to be negligible as compared to the first one due to differencing, and we get 
\begin{equation}
\xv^t \Lcm \xv \approx \sum_{j \in \Scc} \left( \frac{p_j}{d_j} \right) x_j^2,
\end{equation}
where, $p_j = \sum_{i \in \Sc} w_{ij}$ is defined as the ``partial out-degree" of node $j \in \Scc$, i.e., it is the sum of weights of edges crossing over to the set $\Sc$.
Therefore, given a current selected $\Sc$, the greedy algorithm selects the next node, to be added to $\Sc$, that maximizes the increase in
\begin{equation}
\Omega_1(\Sc) \approx \inf_{||\xv||=1} \; \sum_{j \in \Scc} \left( \frac{p_j}{d_j} \right) x_j^2.
\end{equation}
Due to the constraint $||\xv||=1$, the expression being minimized is essentially an infimum over a convex combination of the fractional out-degrees and its value is largely determined by nodes $j \in \Scc$ for which $p_j/d_j$ is small. 
In other words, we must worry about those nodes that have a low ratio of partial degree to the actual degree.
Thus, in the simplest case, our selection algorithm tries to remove those nodes from the unlabeled set that are weakly connected to nodes in the labeled set.
This makes intuitive sense as, in the end, most prediction algorithms involve propagation of labels from the labeled to the unlabeled nodes. If an unlabeled node is strongly connected to various numerous points, its label can be assigned with greater confidence.

Note that using a higher power $k$ in the cost function, i.e., finding $\Omega_k(\Sc)$ for $k>1$ involves $\xv \Lcm^k \xv$ which, loosely speaking, takes into account higher order interactions between the nodes while choosing the nodes to label. In a sense, we expect it to capture the connectivities in a more \emph{global} sense, beyond local interactions, taking into account the underlying manifold structure of the data. 

\subsection{Complexity}
We now comment on the time and space complexity of our algorithm. The most complex step in the greedy procedure for maximizing $\Omega_k(\Sc)$ is computing the smallest eigen-pair of $(\Lcm^k)_\Scc$. 
This can be accomplished using an iterative Rayleigh-quotient minimization based algorithm. 
Specifically, the locally-optimal pre-conditioned conjugate gradient (LOPCG) method \cite{knyazev-SIAM-01} is suitable for this approach. 
Note that  $(\Lcm^k)_\Scc$  can be written as $\mathbf{I}_{\Scc,\Vc} . \Lcm . \Lcm \dots \Lcm. \mathbf{I}_{\Vc, \Scc}$, hence the eigenvalue computation can be broken into atomic matrix-vector products: $\Lcm.\xv$.
Typically, the graphs encountered in learning applications are sparse, leading to efficient implementations of $\Lcm.\xv$.
If $|\Lcm|$ denotes the number of non-zero elements in $\Lcm$, then the complexity of the matrix-vector product is $O(|\Lcm|)$.
The complexity of each eigen-pair computation for $(\Lcm^k)_\Scc$ is then $O(k|\Lcm|r)$, where $r$ is a constant equal to the average number of iterations required for the LOPCG algorithm ($r$ depends on the spectral properties of $\Lcm$ and is independent of its size $|\Vc|$).
The complexity of the label selection algorithm then becomes $O(k|\Lcm| mr)$, where $m$ is the number of labels requested.

In the iterative reconstruction algorithm, since we use polynomial graph filters (Section~\ref{sec:recon}), once again the atomic step is the matrix-vector product $\Lcm.\xv$.
The complexity of this algorithm can be given as $O(|\Lcm|pq)$, where $p$ is the order of the polynomial used to design the filter and $q$ is the average number of iterations required for convergence. Again, both these parameters are independent of $|\Vc|$.
Thus, the overall complexity of our algorithm is $O(|\Lcm|(kmr+pq))$.
In addition, our algorithm has major advantages in terms of space complexity: 
Since, the atomic operation at each step is the matrix-vector product $\Lcm.\xv$, we only need to store $\Lcm$ and a constant number of vectors. Moreover, the structure of the Laplacian matrix allows one to perform the aforementioned operations in a distributed fashion. This makes it well-suited for large-scale implementations using software packages such as GraphLab \cite{Graphlab}.

\subsection{Prediction Error and Number of Labels}
\label{sec:pred_error}

As discussed in Section~\ref{sec:recon}, given the samples $\fv_{\Sc}$ of the true graph signal on a subset of nodes $\Sc \subset \Vc$, its estimate on $\Scc$ is obtained by solving the following problem:
\begin{equation}
\hat{\fv}(\Scc) = \Um_{\Scc,\Kc}\boldsymbol{\alpha}^* \text{ where, }
\mbb{\alpha}^* = \argmin_{\mbb{\alpha}} \|\Um_{\Sc,\Kc} \mbb{\alpha} - \fv(\Sc)\|
\label{eq:ls_recon}
\end{equation}
Here, $\Kc$ is the index set of eigenvectors with eigenvalues less than the cut-off $\omega_c(\Sc)$. If the true signal $\fv \in PW_{\omega_c(\Sc)}(G)$, then the prediction is perfect. However, this is not the case in most problems. The prediction error $\|\fv-\hat{\fv}\|$ roughly equals the portion of energy of the true signal in $[\omega_c(\Sc), \lambda_N]$ frequency band. By choosing the sampling set $\Sc$ that maximizes $\omega_c(\Sc)$, we try to capture most of the signal energy and thus, reduce the prediction error. 



%
%
%
%
%
%

%

An important question in the context of active learning is determining the minimum number of labels required so that the prediction error $\|\fv - \hat{\fv}\|$ is less that some given tolerance $\delta$. To find this we first characterize the smoothness $\gamma(\fv)$ of a signal $\fv$ as
\[\gamma(\fv) = \min \theta \; \text{ s.t. } 
\|\fv - \mbb{\Pc}_{\theta}\fv\| \leq \delta\]
The following theorem gives a lower bound on the minimum of number of labels required in terms of $\gamma(\fv)$.
\begin{theorem}
If $\hat{\fv}$ is obtained by solving~\eqref{eq:ls_recon}, then the minimum number of labels $l$ required to satisfy $\|\fv - \hat{\fv}\| \leq \delta$ is greater than $p$, where $p$ is the number of eigenvalues of $\Lcb$ less than $\gamma(\fv)$.
\end{theorem}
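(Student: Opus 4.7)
The plan is to exploit the fact that the least-squares estimate $\hat{\fv}$ is forced to lie in a low-dimensional bandlimited subspace, and then argue that this subspace must be large enough to approximate $\fv$ to within $\delta$, which in turn forces a lower bound on the number of samples through a dimensionality argument.

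First, I observe from \eqref{eq:ls_recon} that $\hat{\fv} = \Um_{\Vc,\Kc}\mbb{\alpha}^{*}$ lies in the column span of $\Um_{\Vc,\Kc}$, i.e., in $PW_{\omega_c(\Sc)}(G)$, where $\omega_c(\Sc)$ is the cut-off frequency of the chosen sampling set $\Sc$ with $|\Sc| = l$. Since $\mbb{\Pc}_{\omega_c(\Sc)}\fv$ is the orthogonal projection of $\fv$ onto this subspace, it is the minimizer of $\|\fv - \gv\|$ over $\gv \in PW_{\omega_c(\Sc)}(G)$. Therefore
\begin{equation*}
\|\fv - \hat{\fv}\| \;\geq\; \|\fv - \mbb{\Pc}_{\omega_c(\Sc)}\fv\|.
\end{equation*}

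Second, I use this bound to translate the error constraint into a frequency constraint. If the prediction satisfies $\|\fv - \hat{\fv}\| \leq \delta$, then by the inequality above also $\|\fv - \mbb{\Pc}_{\omega_c(\Sc)}\fv\| \leq \delta$. By the very definition of $\gamma(\fv)$ as the smallest $\theta$ for which the projection error onto $PW_\theta(G)$ is at most $\delta$, this forces $\omega_c(\Sc) \geq \gamma(\fv)$. Hence the number of eigenvalues of $\Lcb$ strictly below $\omega_c(\Sc)$, which I denote $K_c$, is at least $p$.

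Third, I invoke the dimensionality argument from Section~\ref{sec:sampling_set}: the subspace $PW_{\omega_c(\Sc)}(G)$ has dimension $K_c$, and by Theorem~\ref{thm:sampling} a signal in this subspace can be uniquely determined from its samples on $\Sc$ only if $|\Sc| \geq K_c$ (otherwise the sampling map $\Um_{\Sc,\Kc}$ would have a nontrivial kernel, contradicting the uniqueness underlying the least-squares reconstruction in \eqref{eq:ls_recon}). Combining, $l = |\Sc| \geq K_c \geq p$, which yields the claimed bound.

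The step I expect to be the main subtlety, rather than a technical obstacle, is justifying the inequality $|\Sc| \geq K_c$ cleanly: one has to argue that if $|\Sc| < K_c$ then the pseudoinverse solution in \eqref{eq:ls_recon} fails to be well-defined as a map that recovers arbitrary signals in $PW_{\omega_c(\Sc)}(G)$, so the interpretation of ``$\hat{\fv}$ as obtained by solving \eqref{eq:ls_recon}'' already presupposes the dimension inequality. Everything else is a direct chain of Pythagorean/min\-imi\-zation observations; no spectral perturbation or approximation bounds are needed.
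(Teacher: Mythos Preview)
Your proposal is correct and follows essentially the same two-step chain as the paper's proof: first the dimensionality constraint $l = |\Sc| \geq |\Kc|$ from full column rank of $\Um_{\Sc,\Kc}$, and second $|\Kc| \geq p$ from the requirement that the bandlimited space be rich enough to approximate $\fv$ within $\delta$. The only difference is cosmetic: where the paper asserts ``the bandwidth of $\hat{\fv}$ has to be at least $\gamma(\fv)$'' in one line, you unpack this via the explicit projection inequality $\|\fv - \hat{\fv}\| \geq \|\fv - \mbb{\Pc}_{\omega_c(\Sc)}\fv\|$ and the minimality in the definition of $\gamma(\fv)$, which is a cleaner justification of the same step.
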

\begin{proof}
In order for \eqref{eq:ls_recon} to have a unique solution, $\Um_{\Sc,\Kc}$ needs to have full column rank, which implies that
$l = |\Sc| \geq |\Kc|$. Now, for $\|\fv - \hat{\fv}\| \leq \delta$ to hold the bandwidth of $\hat{\fv}$ has to be at least $\gamma(\fv)$, or in other words, $|\Kc| \geq p$. This gives us the desired result as $l \geq |\Kc| \geq p$.   
\end{proof}
%


\section{Related Work}
\label{sec:related_work}
Different frameworks have been proposed for pool-based batch-mode active semi-supervised learning including optimal experiment design~\cite{Zhang-PAMI-11, Yu-ICML-06}, generalization error bound minimization~\cite{Gu-ICDM-12, Gu-NIPS-12} and submodular optimization~\cite{Guillory-NIPS-09, Guillory-UAI-11, Hoi-ICML-06}. We now point out connections between some of the graph based approaches in the above categories and our graph signal sampling theory based framework.

The notion of frequency given by GFT is closely related to Laplacian eigenmaps which is a well known dimensionality reduction technique~\cite{Belkin-ML-04}. GFT can be viewed as a way of measuring the signal variation on the manifold represented by Laplacian eigenmaps. By selecting nodes that maximize the bandwidth of the space of recoverable signals, we are  trying to capture as many dimensions of the manifold structure of the data with as few samples as possible. A related active learning method proposed by Zhang et al.~\cite{Zhang-PAMI-11} uses optimal experiment design while considering local structure of the data in a way which is similar to local linear embedding (LLE) for approximating the underlying low-dimensional manifold~\cite{Roweis-SCI-00}. This approach tries to choose the most representative data points from which one can recover the whole data set by local linear reconstruction. It is interesting to note that under certain conditions LLE and Laplacian eigenmaps are equivalent~\cite{Kong-ICML-12}. 

%

%

Gu and Han~\cite{Gu-ICDM-12} propose a method based on minimizing the generalization error bound for learning with local and global consistency (LLGC)~\cite{Zhou-NIPS-04}. Their formulation boils down to choosing subset $S$ that minimizes $\Tr\left((\mu \Lm_{S} + \mathbf{I})^{-2}\right)$. To relate this formulation to our proposed method, note that
\[\Tr\left((\mu \Lm_{S} + \mathbf{I})^{-2}\right) = \sum_{i} \frac{1}{(\zeta_i + 1)^2} \leq \frac{|S|}{(\zeta_1 + 1)^2} \]
%
where, $\zeta_1 \leq \ldots \leq \zeta_{|S|}$ denote the eigenvalues of $\Lm_S$. Loosely speaking, minimizing the above objective function is equivalent to maximizing the smallest eigenvalue $\zeta_1$ of $\Lm_S$. So, this method essentially tries to ensure that the labeled set is well-connected to the unlabeled set whereas our method ensures that the unlabeled set is well-connected to the labeled set~(cf. Section~\ref{sec:graph_interpretation}). 

Submodular functions have been used for active semi-supervised learning on graphs by Guillory and Bilmes~\cite{Guillory-UAI-11, Guillory-NIPS-09}. In this work, the subset of nodes $S\subset \Vc$ is chosen to maximize
\begin{equation}
\Psi(S) = \min_{T \subseteq \Vc \setminus S : T \neq \emptyset} \frac{\Gamma(T)}{|T|},
\end{equation}
where $\Gamma(T)$ denotes the cut function $\sum_{i\in T, j \notin T} w_{ij}$. Intuitively, maximizing $\Psi(S)$ ensures that no subset of unlabeled nodes is weakly connected to the labeled set $S$. This agrees with the graph theoretic interpretation of our method given in Section~\ref{sec:graph_interpretation}. They also provide a bound on the prediction error in terms $\Psi(S)$ 
and a smoothness function $\Phi(\fv) = \sum_{i,j}w_{ij}|f_i-f_j|$.
%
%
This bound gives a theoretical justification for semi-supervised learning using min-cuts~\cite{Blum-ICML-01}. 
It also motivates a graph partitioning-based active learning heuristic \cite{Guillory-NIPS-09}  
which says that to select $l$ nodes to label, the graph should be partitioned into $l$ clusters and one node should be picked at random from each cluster.


\section{Experiments}
\label{sec:experiments}
We compare our method against three active semi-supervised learning approaches mentioned in the previous section, namely, LLR~\cite{Zhang-PAMI-11}, LLGC error bound minimization~\cite{Gu-ICDM-12}, METIS graph partitioning based heuristic~\cite{Guillory-NIPS-09} and $\Psi$-max~\cite{Guillory-UAI-11}. The details of implementation of each method are as follows:
\begin{enumerate}
\item The LLR approach~\cite{Zhang-PAMI-11} allows any prediction method once the samples to be queried are chosen. We use the Laplacian regularized least squares (LapRLS)~\cite{Belkin-JMLR-06} method for prediction (used in~\cite{Zhang-PAMI-11}). 
\item In our implementation of the LLGC bound method~\cite{Gu-ICDM-12}, we fix the parameter $\mu$ to 0.01. Since this approach is based on minimizing the generalization error bound for LLGC, we use the same method for prediction with the queried samples.
\footnote{In our experiments, we observed that the greedy algorithm given in \cite{Gu-ICDM-12} did not converge to a good solution. So we use Monte-Carlo simulations to minimize the objective function.}
\item The normalized cut based active learning heuristic of Guillory and Bilmes~\cite{Guillory-NIPS-09} is implemented using the METIS graph partitioning package~\cite{Karypis-SIAM-98}. This algorithm chooses a random node to label from each partition, so we average the error rates over a 100 trials.
\end{enumerate}
In the implementation of our proposed method, we use approximate polynomial filters of degree 10 with $\alpha = 8$. The parameter $k$ in our method is fixed as 8 for these experiments. Its effect on classification accuracy is studied in Section~\ref{sec:k_effect}. In addition to the above methods, we also compare with the random sampling strategy. We use LapRLS to predict the unknown labels from the randomly queried samples and report the average error rates over 30 trials. 

To intuitively demonstrate the effectiveness of our method, we first test it on a two circles toy data as shown in Figure~\ref{fig:toy}. The data is comprised of 200 nodes from which we would like to select 8 nodes to query. We construct a weighted sparse graph by connecting each node to its 10 nearest neighbors while ensuring that the connections are symmetric. The edge weights are computed with the Gaussian kernel $\exp\left(- \frac{||\xv_i-\xv_j||^2}{2\sigma^2} \right)$ (except in the case of $\Psi$-max where the graph is unweighted). It can be seen from Figure~\ref{fig:toy} that all the methods choose 4 points from each of the two circles. Additionally, the proposed approach selects evenly spaced data points within one circle, while at the same time maximizing the spacing between the selected data points in different circles. This is in accordance with the requirement of choosing points which are most representative of the data.
\begin{figure*}
\centering
\begin{subfigure}{0.24\textwidth}
  \centering
  \includegraphics[width=0.9\textwidth]{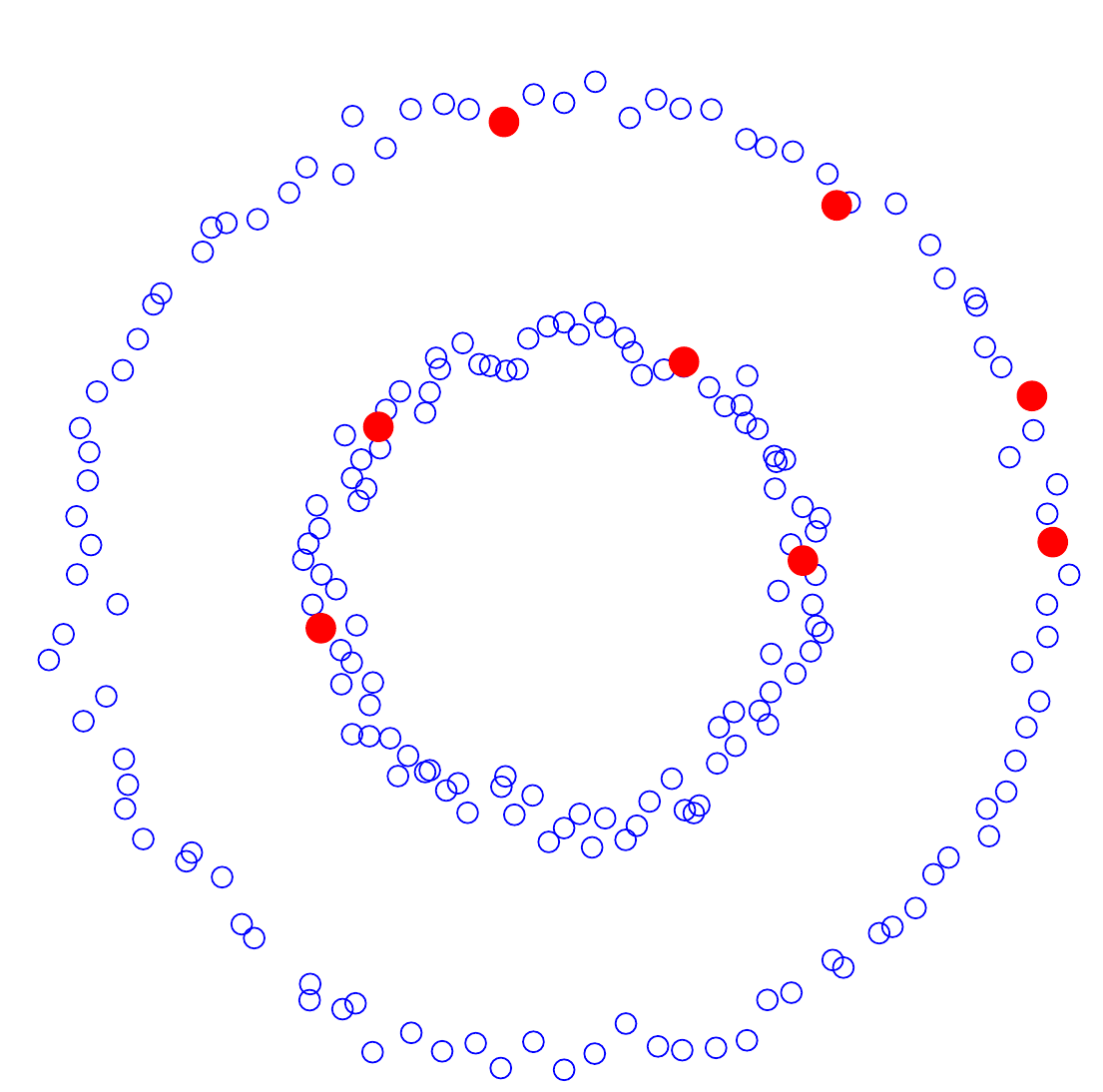}
  \caption{$\Psi$-max}
  \label{fig:Guillory_UAI_toy}
\end{subfigure}
\begin{subfigure}{0.24\textwidth}
  \centering
  \includegraphics[width=0.9\textwidth]{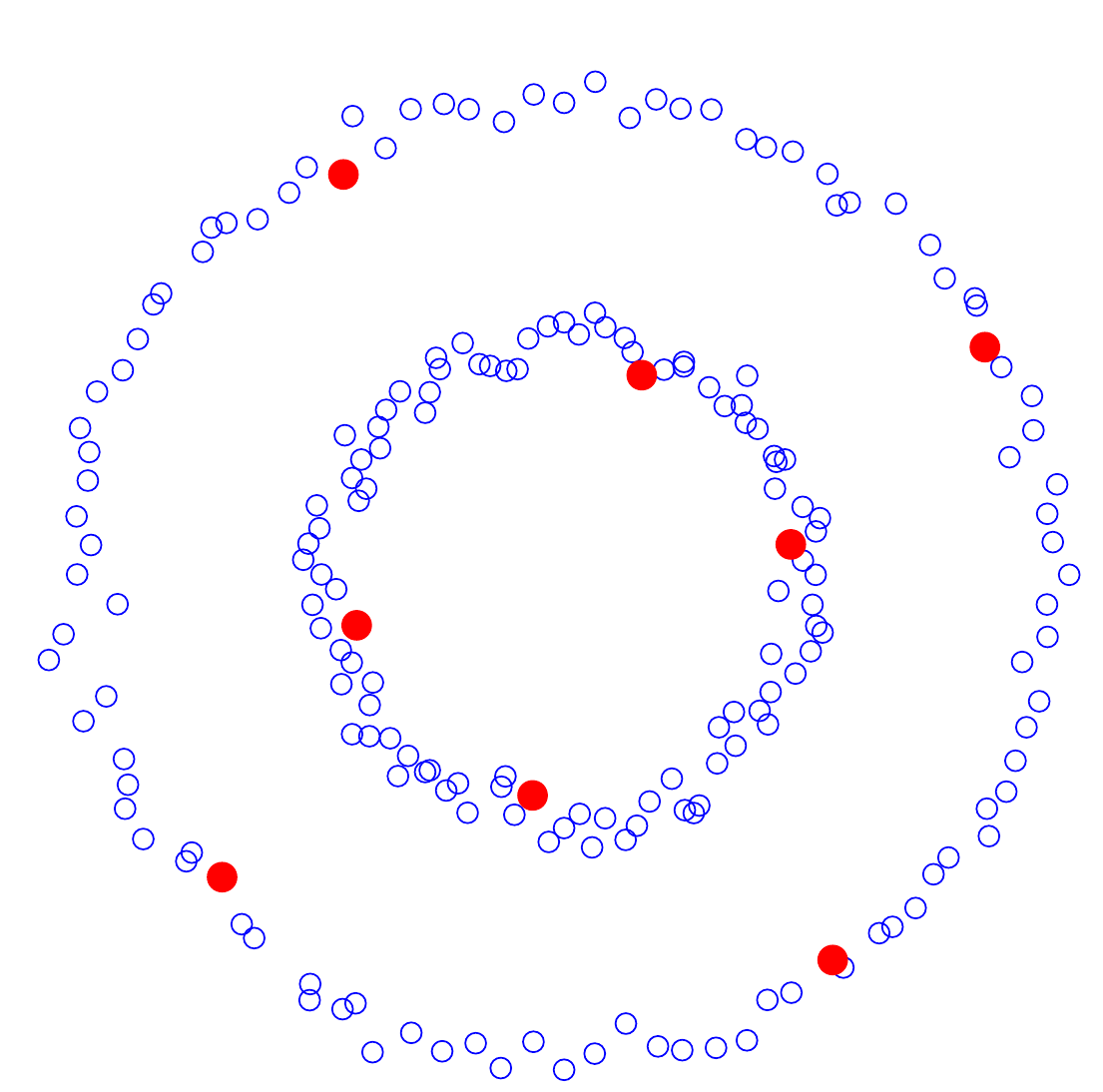}
  \caption{LLR}
  \label{fig:Zhang_PAMI_toy}
\end{subfigure}
\begin{subfigure}{0.24\textwidth}
  \centering
  \includegraphics[width=0.9\textwidth]{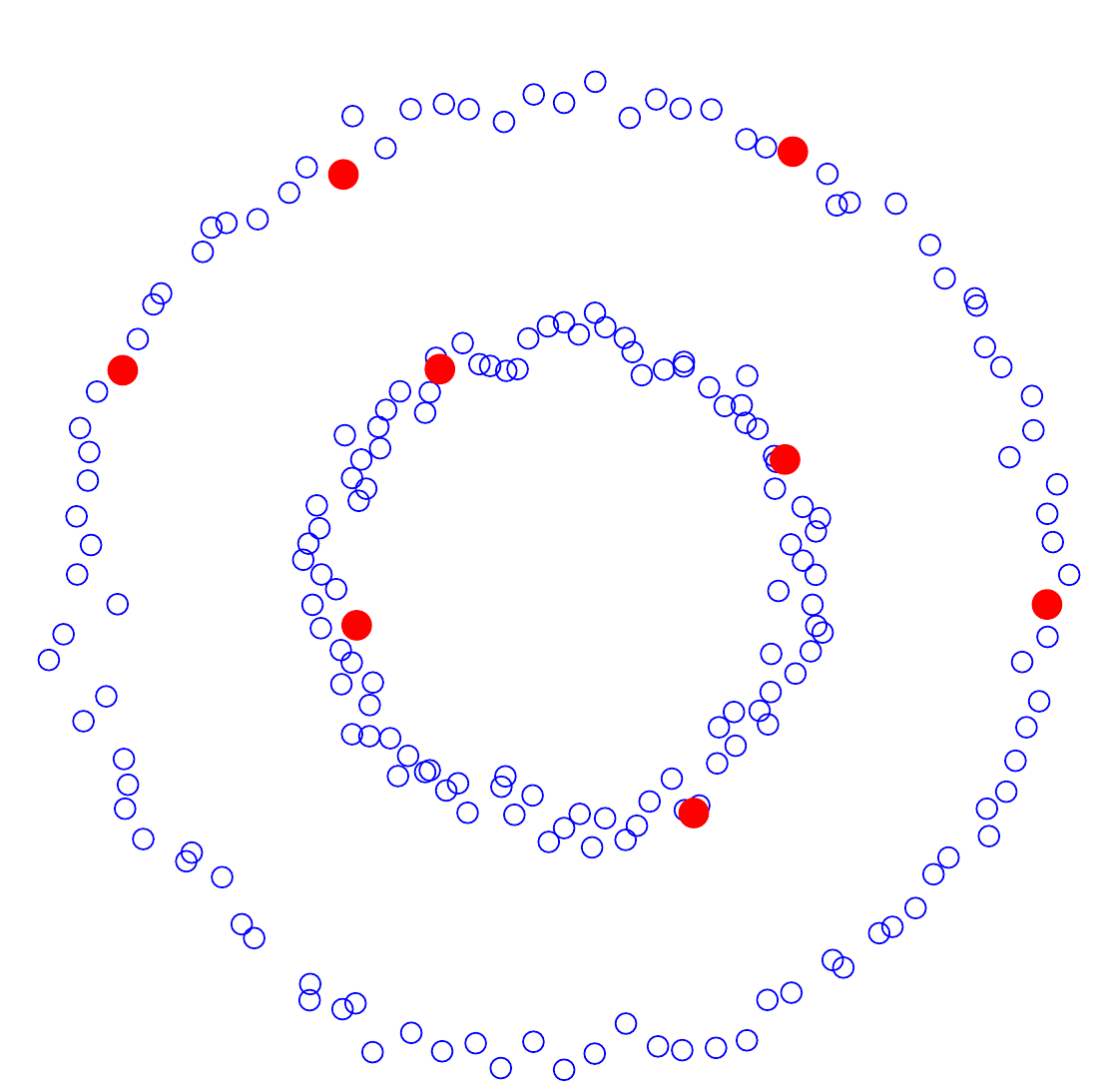}
  \caption{LLGC Bound}
  \label{fig:Gu_ICDM_toy}
\end{subfigure}
\begin{subfigure}{0.24\textwidth}
  \centering
  \includegraphics[width=0.9\textwidth]{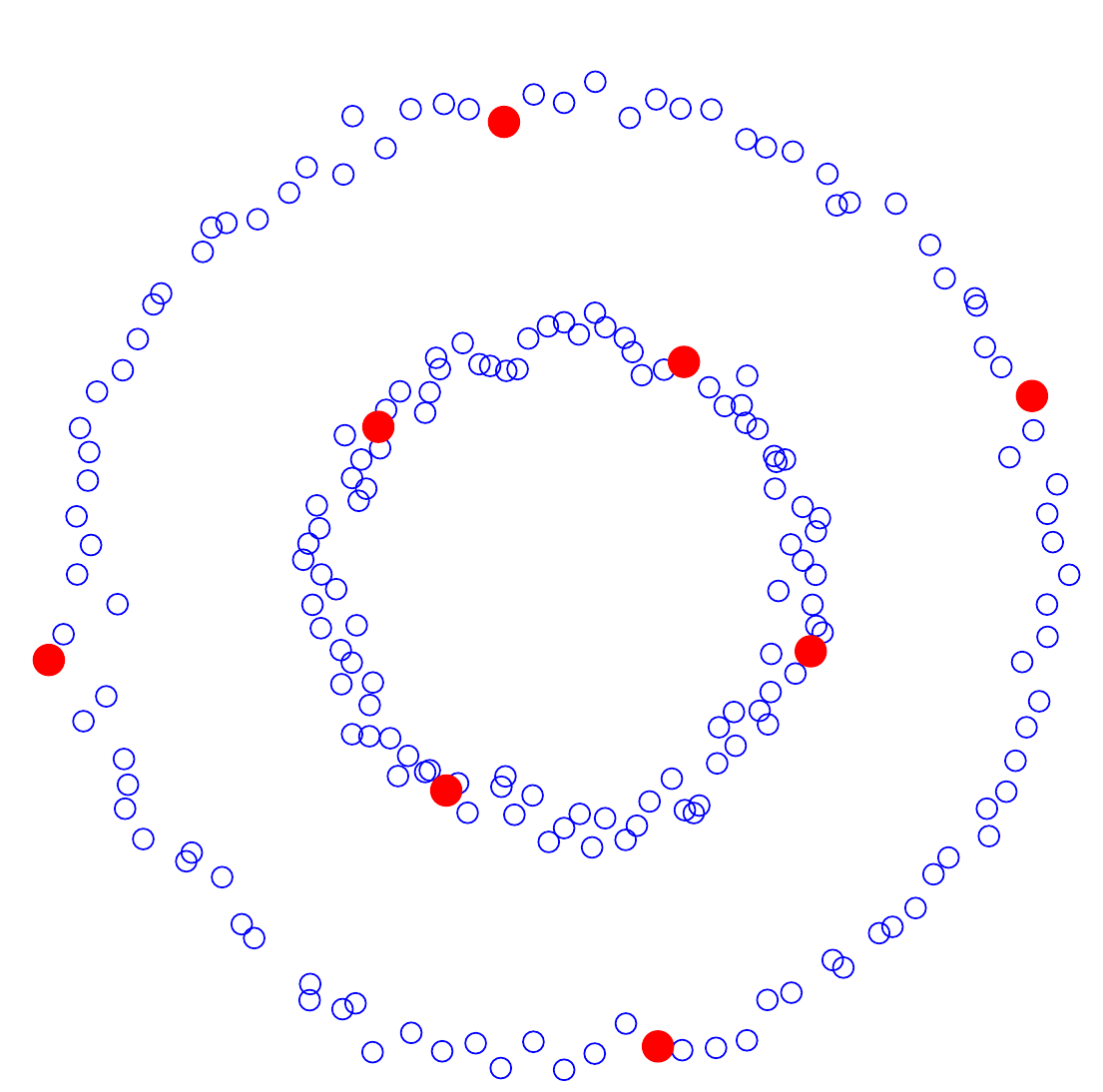}
  \caption{Proposed}
  \label{fig:Proposed_toy}
\end{subfigure}
\caption{Toy example comparing the nodes selected using different active learning methods}
\label{fig:toy}
\end{figure*}

We tested our method in three application scenarios: Handwritten digit recognition, text classification and spoken letters recognition. In these experiments, we do not compare with $\Psi$-max since the method has computational complexity of $O(N^6)$ and, to the best of our knowledge, is not scalable. Next, we provide the details of each experiment. Both the datasets and the graph construction procedures used are typical of what has been used in the literature.



\subsection{Handwritten digits classification}
In this experiment, we used our proposed active semi-supervised learning algorithm to perform a classification task on the USPS handwritten digits dataset\footnote{\url{http://www.cs.nyu.edu/~roweis/data.html}}. This dataset consists of 1100 $16\times 16$ pixel images for each of the digits 0 to 9. We used 100 randomly selected samples for each digit class to create one instance of our dataset. 
Thus each instance consists of 1000 feature vectors (100 samples/class $\times$ 10 digit classes) of dimension 256.

The graph is constructed using Gaussian kernel weights $w_{ij} = \exp\left(- \frac{||\xv_i-\xv_j||^2}{2\sigma^2} \right)$, where $\xv_i$ is the 256-dimensional feature vector composed of pixel intensity values for each image.
The parameter $\sigma$ is chosen to be $1/3$-rd of the average distance to the $K$-th nearest neighbor for all datapoints.
This heuristic has been suggested in \cite{Chapelle-06}.
We fix $K = 10$.
Additionally, the graph is sparsified approximately by restricting the connectivity of each datapoint to its $K$ nearest neighbors, i.e., an edge between nodes $i$ and $j$ is removed unless node $i$ is among the $K$-nearest neighbors of node $j$ or vice-versa.
This results in a symmetric adjacency matrix for the graph.
Using the graph constructed, we select the points to label and report prediction error after reconstruction using our semi-supervised learning algorithm.
We repeat the classification over 10 such instances of the dataset and report the average classification error.
The results are illustrated in Figure~(\ref{fig:usps-results}).
We observe that our proposed method outperforms the others.
A notable feature of our method is that we show very good classification results even for very few labeled samples.
This is due to our inherent criterion for active learning that tries to select those points that maximize the recoverable dimensions of the underlying data manifold.

\subsection{Text classification}
For our text classification example, we use the 20 newsgroups dataset\footnote{\url{http://qwone.com/~jason/20Newsgroups/}}. It contains around 20,000 documents, partitioned in 20 different newsgroups. For our experiment, we consider 10 groups of documents, namely, \{comp.graphics, comp.os.ms-windows.misc, comp.sys.ibm.pc.hardware, comp.\\sys.mac.hardware, rec.autos, rec.motorcycles, sci.crypt, sci.\\electronics, sci.med, sci.space\}, and randomly choose 100 datapoints from each group. We generate 10 such instances of 1000 data points each and report the average errors. We clean the dataset by removing the words that appear in fewer than 20 documents and then select only the 3000 most frequent ones from the remaining words. To form the feature vectors representing the documents, we use the tf-idf statistic of these words. The tf-idf statistic captures the relative importance of a word in a document in a corpus:
\begin{equation}
\text{tf-idf} = (1+\log(\text{tf})) \times \log \left(\frac{N}{\text{idf}} \right)
\end{equation}
where, $\text{tf}$ is the frequency of a word in a document, $\text{idf}$ is the number of documents in which the word appears and $N$ is the total number of documents. Thus, we get $1000$ feature vectors in $3000$ dimensional space. To form the graph of documents, we compute the pairwise cosine similarity between their feature vectors. Each node is connected to the 10 nodes that are most similar to it and the resultant graph is then symmetrized. 
The classification results in Figure~(\ref{fig:text-results}) show that our method performs very well compared to others. However, the absolute error rates are not very good. This is due to the high similarity between different newsgroups which makes the problem inherently difficult.

\subsection{Spoken letters classification}
For the spoken letters classification example, we considered the Isolet dataset\footnote{\url{http://archive.ics.uci.edu/ml/datasets/ISOLET}}. It consists of letters of the English alphabet spoken in isolation twice by 150 different subjects. The speakers are grouped into 5 sets of 30 speakers each, with the groups referred to as isolet1 through isolet5. Each alphabet utterance has been pre-processed beforehand to create a 617-dimensional feature vector.

For this experiment, we considered the task of active semi-supervised classification of utterances into the 26 alphabet categories.
To form an instance of the dataset, 60 utterances are randomly selected out of 300 for each alphabet. Thus, each instance consists of $60\times 26 = 1560$ datapoints of dimension 617.
As in the hand-written digits classification problem, the graph is constructed using Gaussian kernel weights between nodes, with $\sigma$ taken as $1/3$-rd of the average distance to the $K$-th nearest neighbor for each datapoint. We select $K =10$ for our experiment.
Sparsification of the graph is carried out approximately using $K$-nearest neighbor criterion.
With the constructed graph, we perform active semi-supervised learning using all the methods.
The experiment is repeated over 10 instances of the dataset and average prediction error is reported in Figure~(\ref{fig:isolet-results}).
Note that we start with 2\% labeled points to ensure that each method gets a fair chance of selecting at least one point to label from each of the 26 classes.
We observe that our method outperforms the others.
\begin{figure*}
\centering
\begin{subfigure}{0.32\textwidth}
  \centering
  \includegraphics[width=0.98\textwidth]{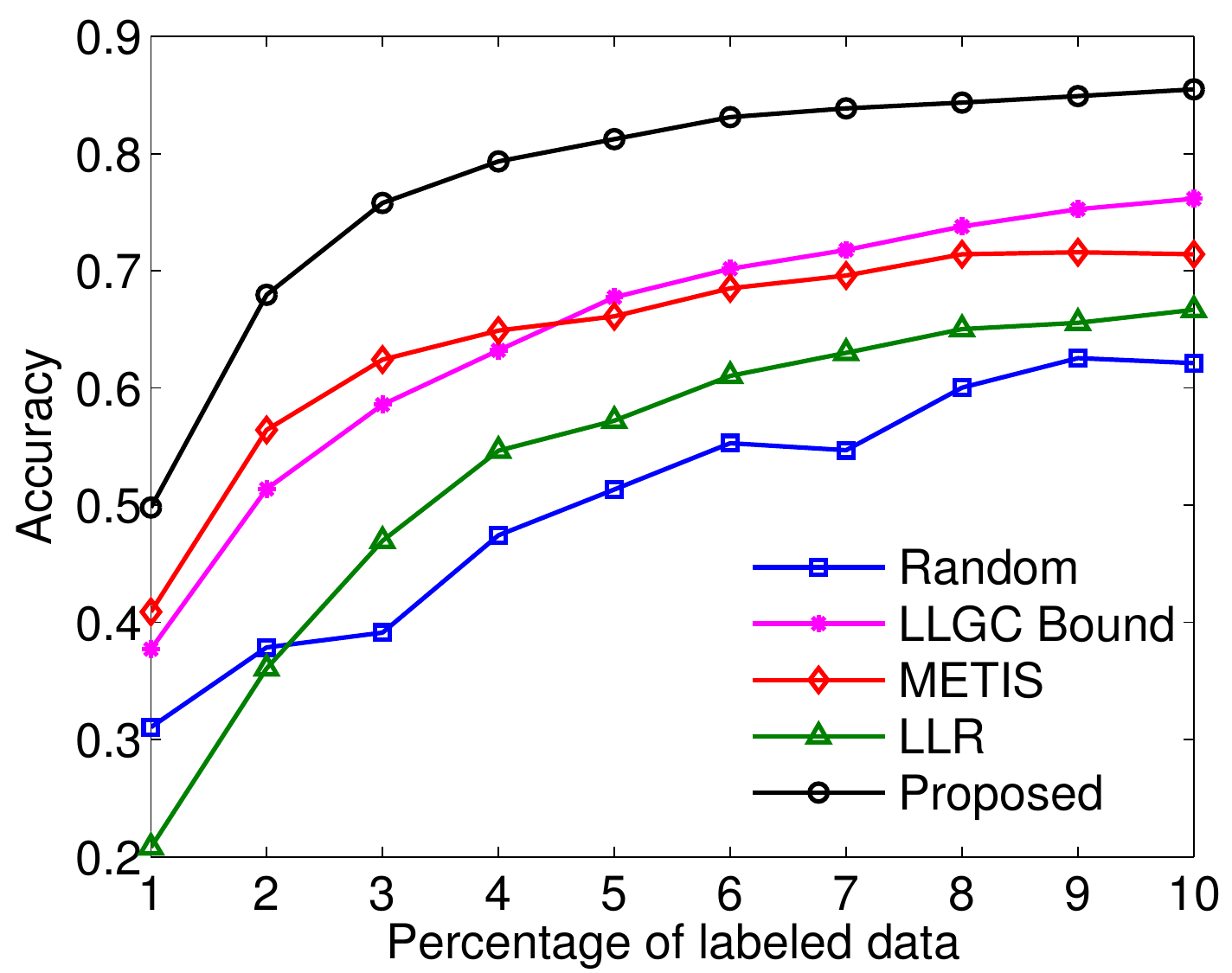}
  \caption{USPS}
  \label{fig:usps-results}
\end{subfigure}
\begin{subfigure}{0.32\textwidth}
  \centering
  \includegraphics[width=0.98\textwidth]{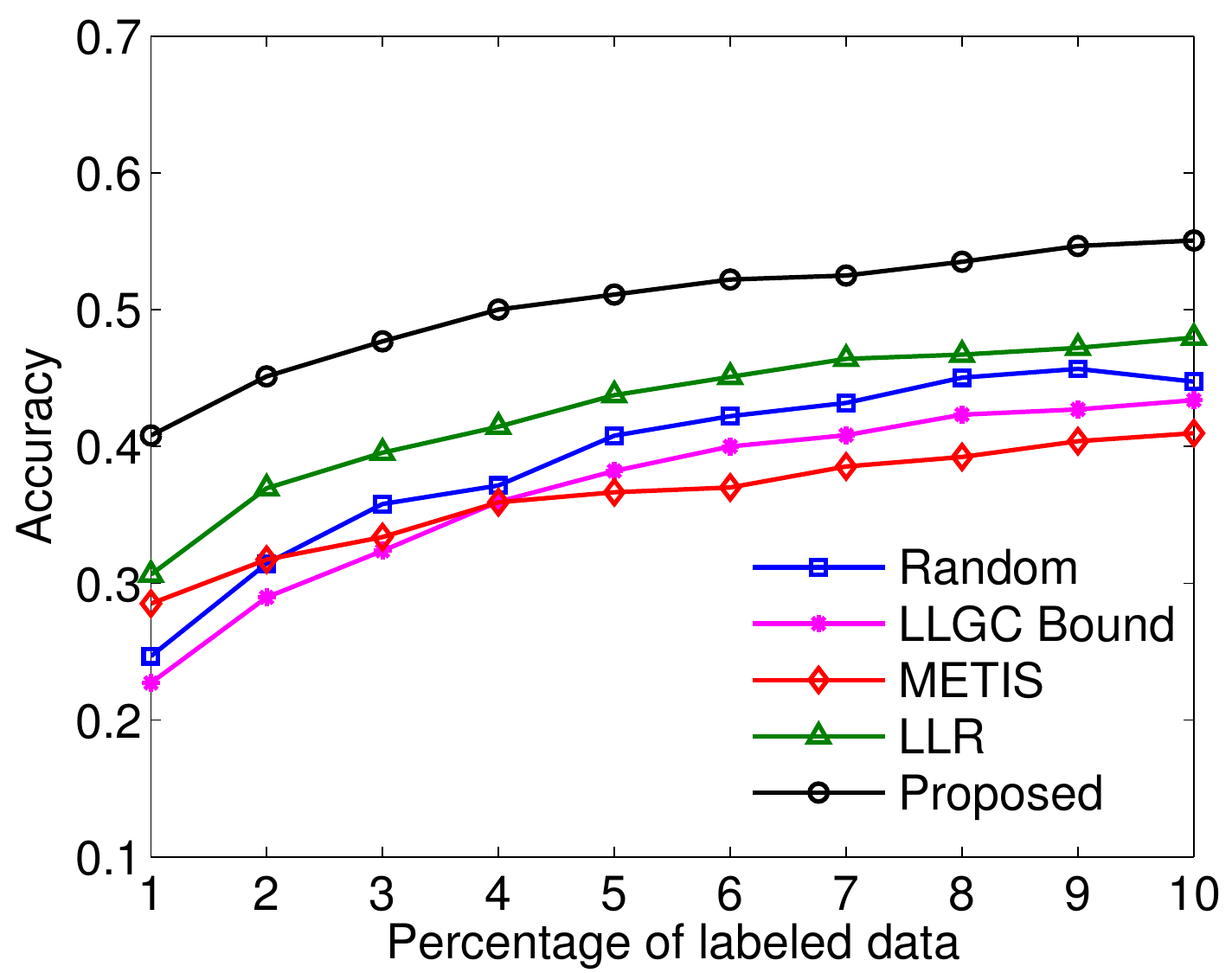}
  \caption{20 newsgroups}
  \label{fig:text-results}
\end{subfigure}
\begin{subfigure}{0.32\textwidth}
  \centering
  \includegraphics[width=0.98\textwidth]{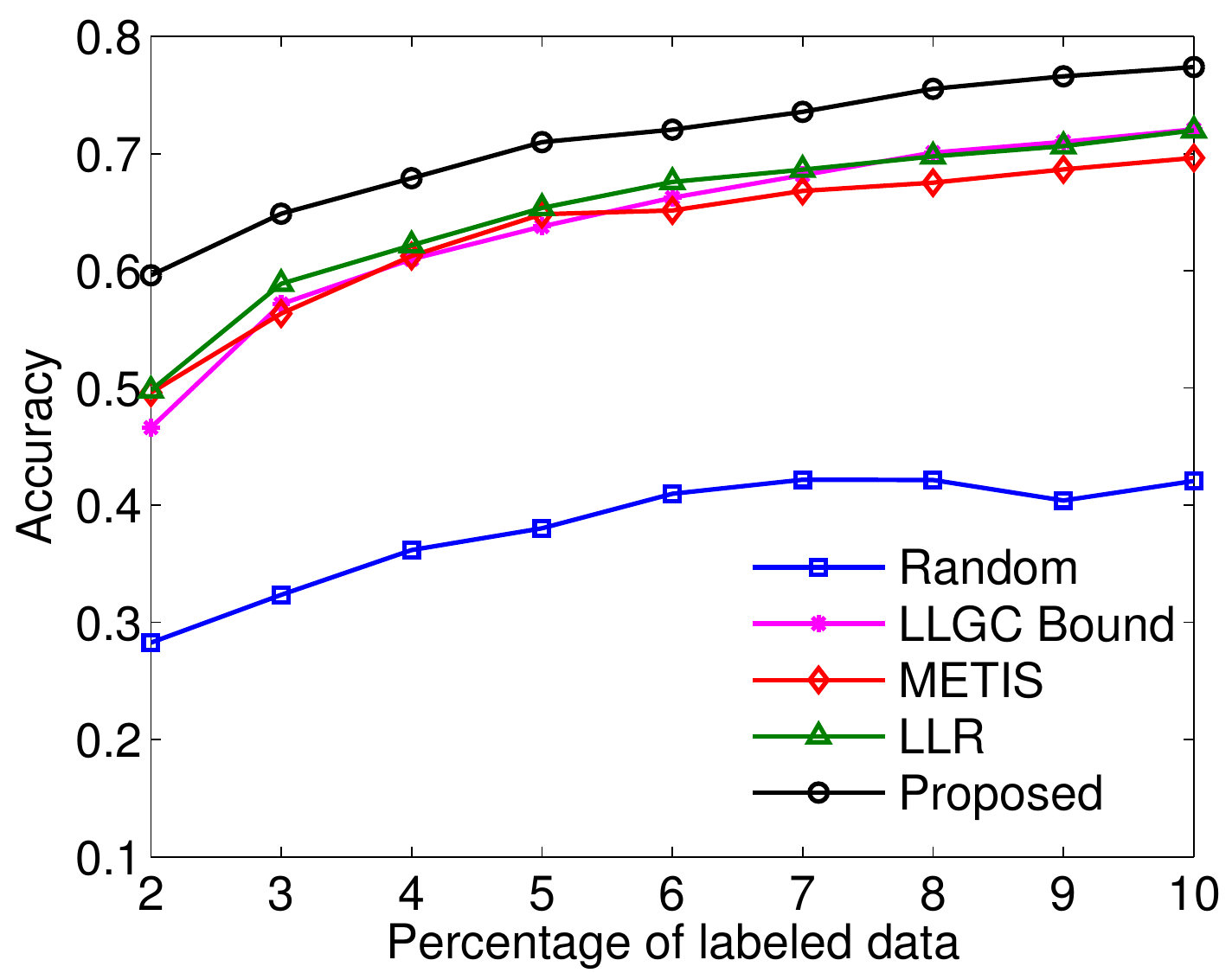}
  \caption{Isolet}
  \label{fig:isolet-results}
\end{subfigure}
\caption{Comparison of active semi-supervised learning methods on real datasets. Plots show the average classification accuracies for different percentages of labeled data.}
\label{fig:results}
\end{figure*}

\subsection{Effect of parameter k}
\label{sec:k_effect}
To study the effect of parameter $k$ in the proposed method on classification accuracy we repeat the above experiments for different values of $k$. Figure~\ref{fig:k_results} shows the results. For the USPS and Isolet datasets, the classification accuracies remain largely unchanged for different values of $k$. For the 20 Newsgroups dataset, a slight improvement in classification accuracies is observed for higher values of $k$. This result agrees with the distribution of GFT coefficients of the class membership functions in each dataset shown in Figure~\ref{fig:gft}. In USPS and Isolet datasets, most of the energy of the graph signal (i.e., the class membership functions) is contained in the first few frequencies. Thus, increasing the value of $k$, so that a better estimate of cut-off frequency is maximized during the choice of sampling set, is not necessary. In other words, maximizing a loose estimate of the cut-off frequency is sufficient. However, the membership functions in the 20 Newsgroups dataset have a significant fraction of their energy spread over high frequencies as shown in Figure~\ref{fig:gft}. Due to this, maximizing a tighter estimate of the the cut-off allows the sampling set selection algorithm to pick nodes that capture more signal energy, resulting in higher accuracies.     
\begin{figure*}
\centering
\begin{subfigure}{0.32\textwidth}
  \centering
  \includegraphics[width=0.98\textwidth]{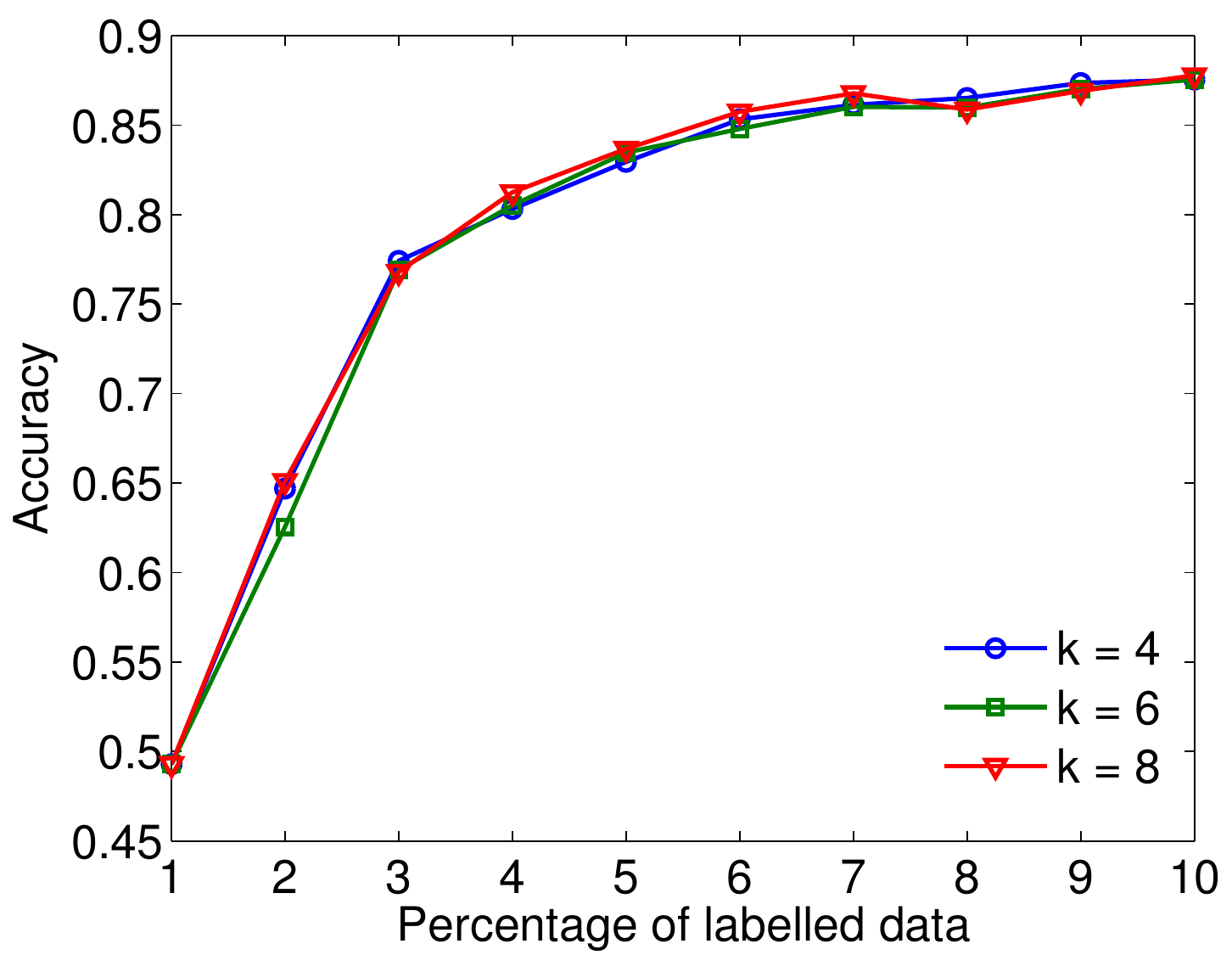}
  \caption{USPS}
  \label{fig:k_usps}
\end{subfigure}
\begin{subfigure}{0.32\textwidth}
  \centering
  \includegraphics[width=0.98\textwidth]{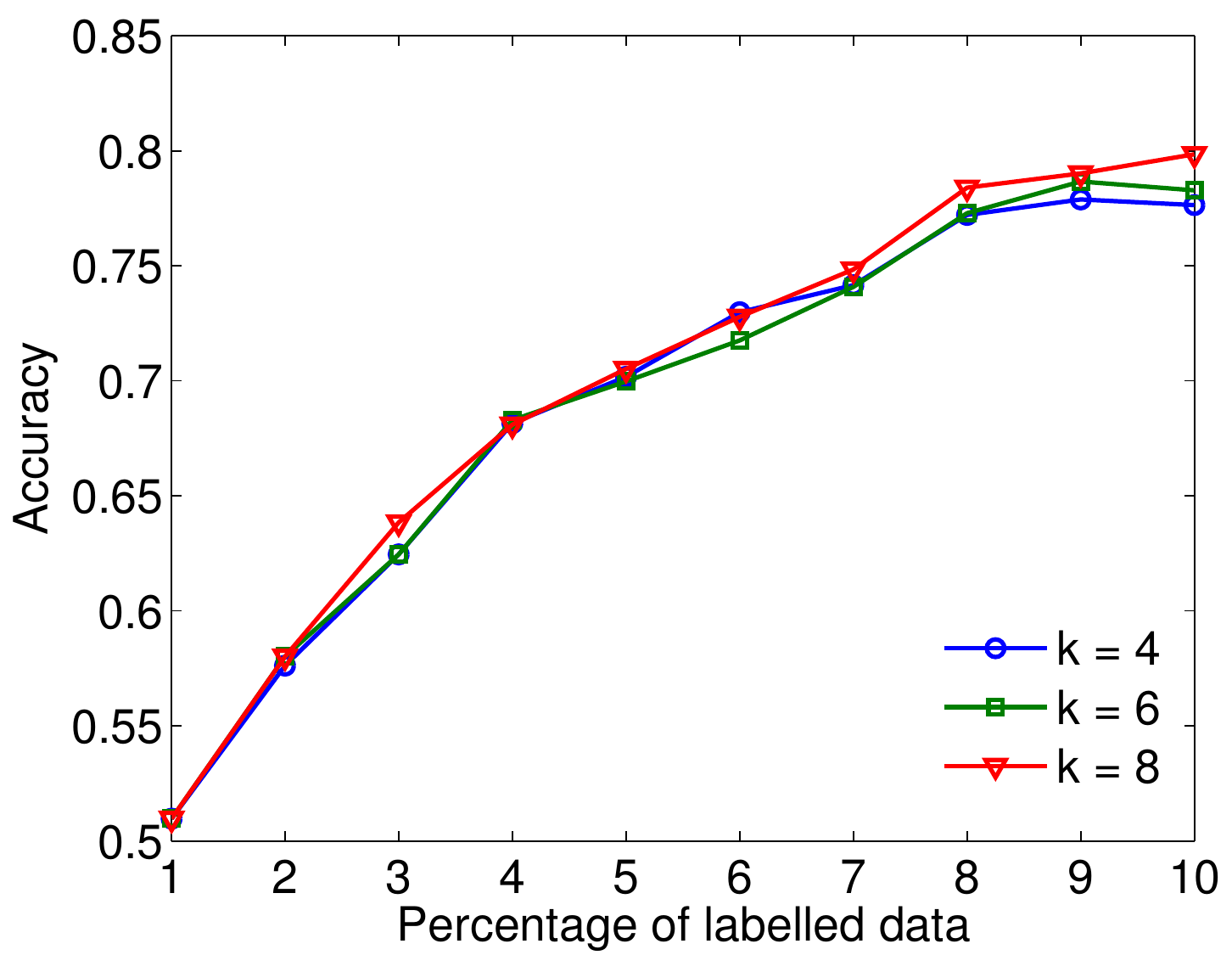}
  \caption{Isolet}
  \label{fig:k_isolet}
\end{subfigure}
\begin{subfigure}{0.32\textwidth}
  \centering
  \includegraphics[width=0.98\textwidth]{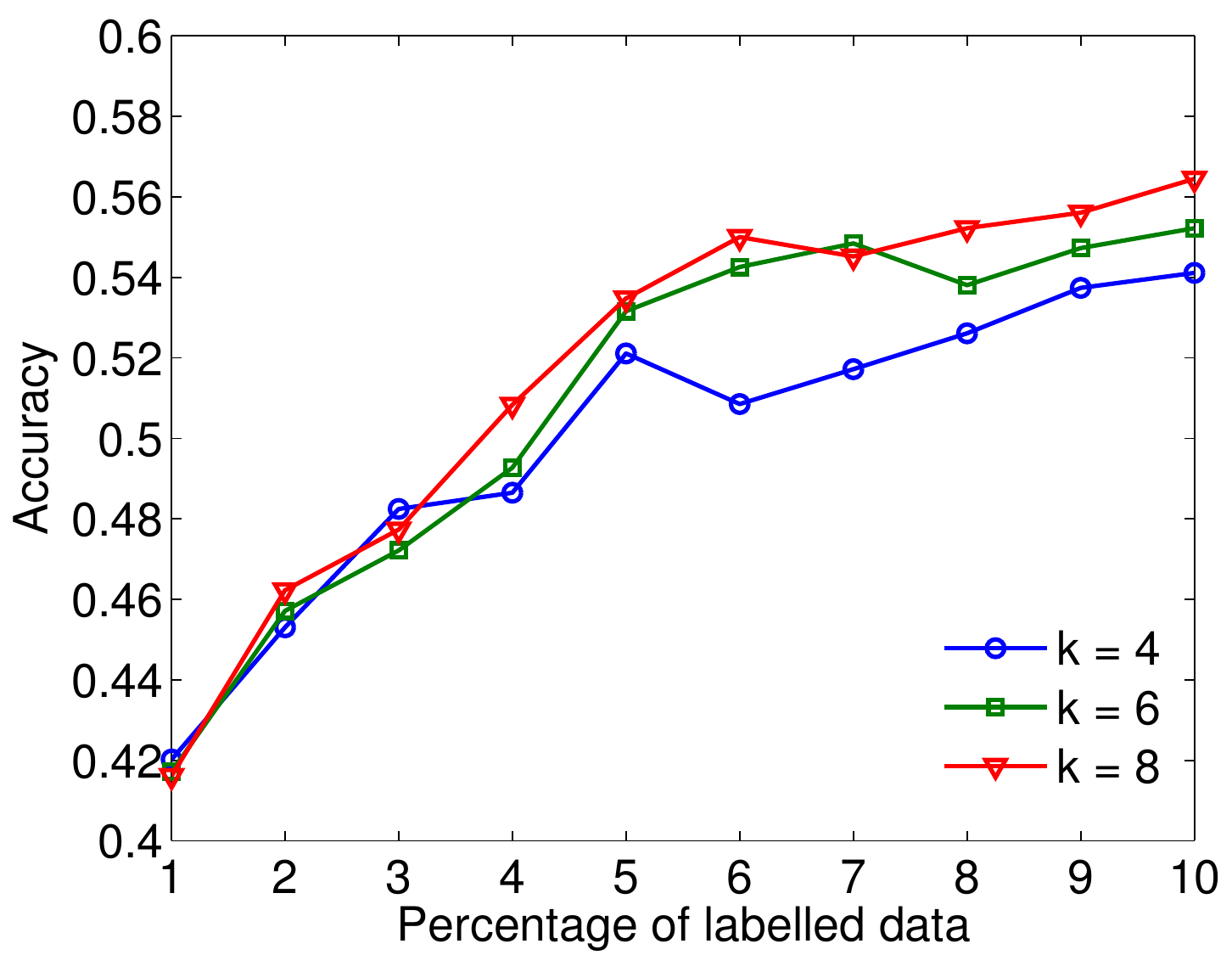}
  \caption{20 newsgroups}
  \label{fig:k_newsgroups}
\end{subfigure}
\caption{Effect of $k$ on classification accuracy of the proposed method. Plots show the average classification accuracy for different percentages of labelled data.}
\label{fig:k_results}
\end{figure*}

\section{Conclusion}
\label{sec:conclusion}
In this paper, we introduce a novel framework for batch mode active semi-supervised learning based on sampling theory for graph signals. The proposed active learning framework aims to select the subset nodes which maximizes the dimension of the space of uniquely recoverable signals. In the context of sampling theory, this translates to selecting the subset with the maximum cut-off frequency. This interpretation leads to a very efficient greedy algorithm. We provide intuition about how the method tries to choose the nodes which are most representative of the data. We also present an efficient semi-supervised learning method based on bandlimited interpolation. We show, through experiments on real data, that our two algorithms, in conjunction, perform very well compared to state of the art methods.       

In the future, we would like to provide bounds on the prediction error of the proposed method (when the true signal is not exactly bandlimited) in terms of signal smoothness and the cut-off frequency. We also hope to have tighter bounds on the number of labels required for desired prediction accuracy. It would be useful to consider an extension of the proposed framework to a partially batch setting so that we can incorporate the label information from previous batches to improve the choice of sampling sets.

%
\bibliographystyle{abbrv}
\bibliography{sigproc_arxiv}  
%
%


\end{document}